\documentclass{article}

\usepackage{microtype}
\usepackage{graphicx}
\usepackage{subcaption}
\usepackage{booktabs} 

\usepackage{hyperref}

\usepackage{nicematrix}
\usepackage{tcolorbox}
\usepackage{titlesec}
\usepackage{makecell}
\definecolor{lightblue}{rgb}{0.88, 0.92, 1}

\usepackage[accepted]{icml2026}

\usepackage{amsmath}
\usepackage{amssymb}
\usepackage{mathtools}
\usepackage{amsthm}

\usepackage[capitalize,noabbrev]{cleveref}

\theoremstyle{plain}
\newtheorem{theorem}{Theorem}[section]
\newtheorem{proposition}[theorem]{Proposition}

\theoremstyle{definition}

\theoremstyle{remark}

\usepackage[textsize=tiny]{todonotes}

\icmltitlerunning{SLAT: Segment-Level Adaptive Trimming for Efficient CoT Reasoning}

\begin{document}

\twocolumn[
  \icmltitle{SLAT: Segment-Level Adaptive Trimming for Efficient CoT Reasoning}



  \icmlsetsymbol{equal}{*}

  \begin{icmlauthorlist}
    \icmlauthor{Jian Yao}{dsai}
    \icmlauthor{Xiongcai Luo}{bd}
    \icmlauthor{Ran Cheng}{dsai,comp,sz}
    \icmlauthor{Kay Chen Tan}{dsai}
  \end{icmlauthorlist}

  \icmlaffiliation{dsai}{Department of Data Science and Artificial Intelligence, The Hong Kong Polytechnic University}
  \icmlaffiliation{comp}{The Hong Kong Polytechnic University-Daya Bay Technology and Innovation Research Institute, Huizhou,Guangdong Province, China.}
  \icmlaffiliation{bd}{Bytedance}
  \icmlaffiliation{sz}{The Hong Kong Polytechnic University Shenzhen Research Institute, Shenzhen, China}

  \icmlcorrespondingauthor{Ran Cheng}{ran-peter.cheng@polyu.edu.hk}

  \icmlkeywords{Machine Learning, ICML}

  \vskip 0.3in
]



\printAffiliationsAndNotice{}  

\begin{abstract}
Recent advances in Large Reasoning Models have significantly improved chain-of-thought (CoT) capabilities via reinforcement learning (RL). 
However, generated reasoning chains frequently suffer from structural redundancy (i.e., \emph{overthinking}), incurring high computational overhead without improving answer correctness. 
Existing mitigation strategies typically rely on token-uniform length penalties, which provide coarse, segment-agnostic pressure toward shorter outputs and can inadvertently suppress useful reasoning alongside redundancy.
To address this, we demonstrate that inefficiency concentrates in high-probability segments with low marginal utility.
We derive a theoretical characterization of segment suboptimality under the correctness-length trade-off objective and propose \textsc{SLAT} (Segment-Level Adaptive Trimming), an RL framework that selectively suppresses redundant segments based on this criterion.
Empirical results on standard benchmarks indicate that \textsc{SLAT} establishes a superior accuracy-efficiency Pareto frontier, reducing reasoning length by $50\%$ relative to uncompressed baselines while maintaining competitive accuracy.
Overall, our results suggest that theoretically grounded, segment-aware trimming is a promising direction for efficient CoT reasoning in large language models.
\end{abstract}

\section{Introduction}
\label{sec:intro}
Recent advancements in large reasoning models (LRMs) \cite{openai-o1,guo2025deepseek,team2025gemma,team2025kimi,zeng2025glm,seed2025seed1}, such as OpenAI-o1 and DeepSeek-R1, have demonstrated significant improvements in complex reasoning tasks through reinforcement learning (RL), inspiring a growing line of follow-up work on RL-trained reasoning models 
\cite{zeng2025simplerlzooinvestigatingtamingzero,liu2025understanding,yu2025dapo,zhang2025srpo,yao2025diversity,zhang2025relax,zhang2025100}. 
These models refine long chain-of-thought (CoT) reasoning, leading to improved accuracy and robustness across mathematics, logic, and programming domains.
However, increasing reasoning length brings substantial computational overhead and redundancy, a phenomenon often referred to as overthinking \citep{chen2024not}. 
This has sparked rising interest in efficient reasoning, which aims to shorten reasoning traces while maintaining strong task performance \citep{sui2025stop,feng2025efficient}.

One way to pursue efficient reasoning is to intervene only at inference time, for example, by stopping generation once confidence stabilizes or a first coherent solution appears \citep{laaouach2025halt,wei2025stop}. 
Another way is to train the model so that efficiency is built into the policy, via RL or supervised fine-tuning (SFT) followed by RL \citep{hou2025thinkprune,luo2025o1,aggarwal2025l1,liu2025learn,wu2025arm,he2025thinkdial}. 
In this line of work, a common approach is to add an explicit length penalty to discourage long CoTs \cite{hou2025thinkprune,aggarwal2025l1,liu2025learn,arora2025training}. 
However, token-level length costs are coarse-grained, since they penalize all tokens equally and cannot distinguish necessary reasoning from redundant exploration. 
When combined with correctness rewards, they may suppress essential intermediate steps and shrink the exploration space, leading to shorter but less capable policies. 
This motivates a more targeted mechanism for reducing redundancy.

\begin{figure*}[ht]
  \begin{center}
    \centerline{\includegraphics[width=\linewidth]{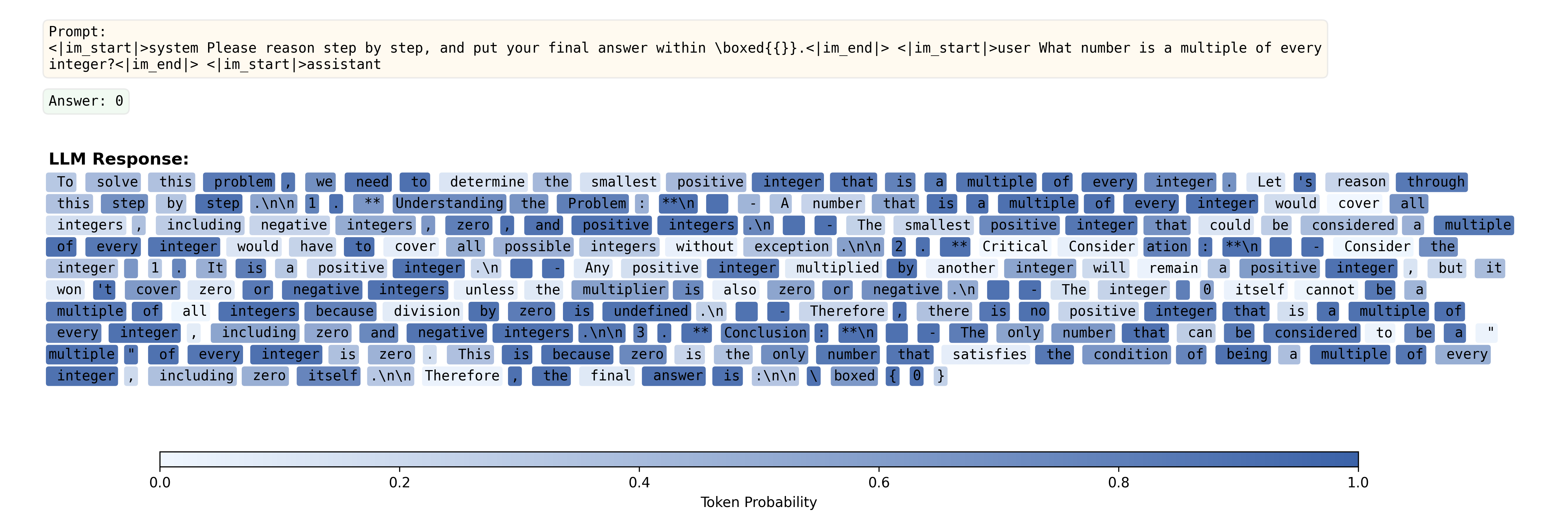}}
    \caption{Case study of redundancy in a reasoning trace.
    Although the model outputs the correct answer, it generates a long CoT that contains repetitive restatements (of the problem condition and trivial deductions); these tokens form high-probability segments (darker shading), indicating near-deterministic, low-information content.
    Additional cases are provided in Appendix~\ref{Apx:case_study}.
    }
    \label{fig:case-study}
  \end{center}
\end{figure*}

To better understand how such redundancy manifests in practice, we inspect individual reasoning traces of a trained reasoning model. 
Figure~\ref{fig:case-study} shows token-level generation probabilities for the prompt ``What number is a multiple of every integer?''.
Although the final answer is correct, the model produces a disproportionately long explanation.
Importantly, once the key idea is established, the remaining CoT largely consists of restating definitions and spelling out obvious intermediate deductions rather than introducing new reasoning.
A salient symptom is the repeated occurrence of the low-information phrase \emph{``multiple of every integer''}.
This phrase is echoed across successive sentences with minimal semantic progress, forming a contiguous high-probability block in the heatmap.
There are also two additional representative overthinking cases provided in Appendix~\ref{Apx:case_study} (\emph{repetition of problem statements} and \emph{verification-style tail}), which similarly exhibit long contiguous high-probability segments with limited incremental reasoning content.
Motivated by this pattern, we take a segment-level perspective and hypothesize that \emph{inefficiency often concentrates in high-probability reasoning segments}.

This hypothesis is also consistent with recent entropy-based analyses of CoT redundancy, which argue that low-entropy generation tends to carry limited information about the final answer \cite{li2025compressing}.
Since high-probability token runs correspond to near-deterministic, low-entropy behavior in practice, these results provide complementary support for focusing on high-probability segments as a major source of inefficiency.
To make our hypothesis precise, we adopt a finer-grained, segment-level perspective and analyze it under an explicit correctness-length trade-off.
Specifically, we study an objective that balances accuracy and reasoning length, and derive a sufficient condition under which generating a particular segment is suboptimal.
The condition depends on both the segment length and its generation probability, and becomes easier to satisfy when a segment is long and produced with high probability.
This theoretical characterization provides a principled inductive bias for efficient reasoning and motivates a trimming-based training framework that selectively suppresses such segments, encouraging concise yet faithful CoTs.

To operationalize this inductive bias in practice, we build on the Group Relative Policy Optimization (GRPO) algorithm \cite{shao2024deepseekmath} and implement it as segment-level reward shaping. 
For each generated answer, we run a sliding window over the reasoning trace, compute the joint probability of the tokens within each window, and assign a penalty whenever this probability exceeds a predefined threshold. 
The accumulated penalty is then added as an extra reward term on top of the original GRPO objective.
We evaluate our method on several standard reasoning benchmarks. 
Empirical results show that \textsc{SLAT} consistently achieves a better accuracy-length trade-off than uniform length objectives, reducing the average reasoning length by around $50\%$ relative to the corresponding uncompressed baseline while maintaining competitive accuracy.
These results suggest that theory-informed, segment-level reward shaping is an effective mechanism for enhancing the reasoning efficiency of LLMs.

Our contributions are:
\begin{enumerate}
    \item We identify high-probability segments as a common source of inefficiency in long CoT reasoning and provide a theoretical characterization of when generating such segments is suboptimal under a correctness-length trade-off, introducing a principled inductive bias for efficient reasoning.
    \item Guided by this insight, we propose a trimming-based framework integrated into RL training that selectively suppresses redundant high-probability segments, encouraging concise yet faithful reasoning.
    \item We validate our approach on multiple standard reasoning benchmarks, achieving substantial reductions in reasoning length while maintaining accuracy and consistently outperforming length-regularized baselines.
\end{enumerate}


\section{Preliminary}
\label{sec:preliminary}
\subsection{RL for LLMs}
We cast the LLM generation pipeline as an RL problem.
Here, the LLM is treated as a policy that produces outputs (actions) conditioned on input prompts (states) and receives evaluative feedback (rewards) for its generated responses.
This view aligns the sequential nature of language generation with RL’s state–action–reward formalism, enabling systematic behavior optimization via reward signals.

Formally, given a prompt $x \in \mathcal{X}$, we define the action space $\mathcal{O}$ as the set of potential output sequences $o = (o^1, \dots, o^T)$ ($T$ is the length of the output). A policy $\pi_{\theta}(\cdot\mid x)$, parameterized by $\theta$, generates outputs conditioned on $x$ according to the distribution:
\begin{equation}
\pi_{\theta}(o|x):=\prod_{t} \pi_{\theta}(o^t | x, o^{<t}),
\end{equation}
where $o^{<t}=(o^1, o^2, ..., o^{t-1})$.

\subsection{GRPO algorithm}
\label{sec:grpo}

The R1-zero training method proposed by DeepSeek-R1 \citep{guo2025deepseek} has attracted significant research attention due to its computational efficiency and effectiveness.
In our work, we adopt this training method as our backbone.
R1-zero centers on the GRPO algorithm \citep{shao2024deepseekmath}, which streamlines the process by eliminating the need for a separate critic model, which is usually as large as the policy model, and instead estimates baselines using group scores.
Specifically, for each question $x$, GRPO samples a group of outputs 
$\{o_1,o_2,...,o_G\}$ from the old policy $\pi_{\text{old}}$ and optimizes the
policy $\pi_{\theta}$ by maximizing the following objective:
\begin{align}
\label{equ:grpo}
& \mathcal{J}_0(\theta)=
\mathbb{E}_{\substack{
x \sim \mathcal{X},\, \\ 
\{o_i\}_{i=1}^G \sim \pi_{\text{old}}(\cdot|x)}}
\Bigg[
\frac{1}{G} \sum_{i=1}^G 
\Big(
\min \Big(
\frac{\pi_{\theta}(o_i|x)}{\pi_{\text{old}}(o_i|x)} A_i,
\notag \\
&\text{clip}\Big(
\frac{\pi_{\theta}(o_i|x)}{\pi_{\text{old}}(o_i|x)},
1-\epsilon,\,1+\epsilon
\Big) A_i
\Big)
- \beta\, \mathbb{D}_{\text{KL}}\big(\pi_{\theta}\,\|\,\pi_{\text{ref}}
\big)\Big)\Bigg],
\end{align}
where $\epsilon$ and $\beta$ are hyperparameters, the KL term is defined as
\begin{equation}
    \mathbb{D}_{KL}  (\pi_{\theta}||\pi_{ref}) = \frac{\pi_{ref}(o_i|x)}{\pi_{\theta}(o_i|x)}
    - \log \frac{\pi_{ref}(o_i|x)}{\pi_{\theta}(o_i|x)} -1,
\end{equation}
and the advantage $A_i$ is computed using a group of rewards $\{r_1,r_2,...,r_G\}$:
\begin{equation}
    A_i = \frac{r_i-{\rm mean}(\{r_1,r_2,...,r_G\})}
    {{\rm std}(\{r_1,r_2,...,r_G\})}.
\end{equation}

For clarity, we denote the binary correctness reward used in GRPO as $r_{\mathrm{ori}}$, to distinguish it from the additional reward terms introduced later.

\subsection{CoT Reasoning}
In CoT reasoning, we decompose the model output token sequence $o$ into a reasoning trace and a final answer, i.e., $o=[z \,\|\, a]$, where $z$ is the CoT token sequence generated before producing the final answer $a$.
Let $\mathcal{A}$ denote the set of correct answers for $x$.
We define the probability that the LLM outputs a correct answer (from the perspective of the CoT generation process) as
\begin{equation}
P_\theta(\mathcal{A}\mid x)=\mathbb{E}_{z\sim\pi_\theta(\cdot\mid x)}\,P_\theta(\mathcal{A}\mid x,z),
\end{equation}
where $P_\theta(\mathcal{A}\mid x,z)=\sum_{a\in\mathcal{A}}\pi_\theta(a\mid x,z)$ denotes the conditional correctness probability given a specific reasoning trajectory $z$.

\section{Method}
This section develops a segment-level perspective on efficient CoT reasoning.
We first formalize a correctness-length objective and analyze how a candidate reasoning segment affects this trade-off, deriving a sufficient condition under which removing that segment improves the objective.
This characterization suggests that long, near-deterministic stretches are unlikely to be efficiency-optimal, and directly motivates a segment-aware trimming principle.
We then instantiate this principle in RL via trimming-based reward shaping that penalizes the identified high-probability segments, steering the policy toward more concise yet faithful CoTs.

\subsection{Problem Formulation and Theoretical Insight}
Following the notation in Section \ref{sec:preliminary},
let $L(z)$ denote the number of tokens in $z$.  
Our goal is to minimize the expected length of the reasoning chain while ensuring that the correctness probability remains above a desired threshold.  
Formally, this can be expressed as the following constrained optimization problem:
\begin{align}
    \min_{\theta} \;
    &\mathbb{E}_{z \sim \pi_{\theta}(\cdot \mid x)}
    \left[ L(z) \right], \notag \\
    \text{s.t.} \quad
    P_\theta(\mathcal{A}\mid x)=&\mathbb{E}_{z \sim \pi_{\theta}(\cdot \mid x)}
    \left[ P_{\theta}(\mathcal{A} \mid x, z) \right]
    \ge p_0, \notag 
    \label{eq:constraint}
\end{align}
where $p_0 \in (0,1)$ denotes the minimum acceptable accuracy level.  
This formulation captures the inherent trade-off between reasoning efficiency and solution accuracy in CoT generation.
To facilitate optimization and analysis, we consider maximizing the corresponding Lagrangian form:
\begin{equation}
\label{equ:obj}
    \mathcal{H}(\theta; \lambda)
    = \mathbb{E}_{z \sim \pi_{\theta}(\cdot \mid x)}
      \left[ P_{\theta}(\mathcal{A} \mid x, z) \right]
      - \lambda \,
      \mathbb{E}_{z \sim \pi_{\theta}(\cdot \mid x)}
      \left[ L(z) \right],
\end{equation}
where $\lambda > 0$ serves as a length penalty coefficient that explicitly balances correctness and efficiency.  
This formulation provides a principled foundation for designing algorithms that encourage concise yet accurate reasoning in LLMs.
A common instantiation in related work is to incorporate reasoning length directly into the objective via a length-weighted term, using $\lambda$ to regulate the pressure toward shorter CoTs \cite{hou2025thinkprune,aggarwal2025l1,liu2025learn}.
However, such token-uniform regularization does not account for the internal structure of a reasoning trace.
We therefore take a segment-level view and ask how allocating probability mass to generate a particular segment affects $\mathcal{H}(\theta;\lambda)$, compared to alternative continuations.
The following proposition then provides a segment-level sufficient condition under which generating a particular segment becomes suboptimal.

\begin{proposition}[A sufficient condition for segment suboptimality]
\label{prop1}
Let $x$ be an input and let $z \sim \pi_\theta(\cdot \mid x)$ denote a complete reasoning trace.
Consider a candidate segment $z_1$ that occurs within $z$.
Without loss of generality, we assume $z_1$ is a prefix of the trace (any preceding context can be absorbed into $x$), and write
\begin{equation}
    z = [\, z_1 \Vert z_2 \,],
\end{equation}
where $\Vert$ denotes token concatenation and $z_2$ is the continuation sampled from $\pi_\theta(\cdot\mid x,z_1)$.
Assume $0<\pi_\theta(z_1\mid x)<1$.
If
\begin{align}
    &L(z_1)
    + \mathbb{E}_{z_2 \sim \pi_\theta(\cdot \mid x, z_1)}\!\left[L(z_2)\right]
    - \mathbb{E}_{z' \sim \pi_\theta(\cdot \mid x)}\!\left[L(z')\right]
    \notag\\
    &>
    \frac{1}{\lambda}
    \left(\frac{1}{\pi_\theta(z_1 \mid x)} - 1\right)
    P_\theta(\mathcal{A}\mid x),
\end{align}
where $z'$ is a dummy variable denoting a generic trace sampled from $\pi_\theta(\cdot\mid x)$,  
then assigning probability mass to generating $z_1$ is suboptimal under the correctness-length objective:
decreasing $\pi_\theta(z_1\mid x)$ (and redistributing the removed mass to alternative traces) strictly increases the objective.
\end{proposition}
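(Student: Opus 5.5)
The plan is to reduce the claim to a one-dimensional sensitivity computation in the scalar $p := \pi_\theta(z_1\mid x)$. We hold fixed both the conditional continuation distribution $\pi_\theta(\cdot\mid x,z_1)$ and the conditional distribution over traces that do not begin with $z_1$. Any decrease of $p$ then moves mass onto the alternatives in proportion to their current conditional weights; this is how we make precise "redistributing the removed mass to alternative traces". Under this parametrization the trace distribution is a two-component mixture, and $\mathcal{H}$ in \eqref{equ:obj} becomes affine in $p$. The statement then amounts to showing that the slope in $p$ is strictly negative.

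\textbf{Step 1 (decomposition).} Define
\[
C_1 := \mathbb{E}_{z_2\sim\pi_\theta(\cdot\mid x,z_1)}\big[P_\theta(\mathcal{A}\mid x,[z_1\Vert z_2])\big],
\qquad
\ell_1 := L(z_1)+\mathbb{E}_{z_2}[L(z_2)].
\]
Let $C_{\neg}$ and $\ell_{\neg}$ be the corresponding conditional expected correctness and length over traces not starting with $z_1$. Then
\[
P_\theta(\mathcal{A}\mid x)=pC_1+(1-p)C_{\neg},
\qquad
\mathbb{E}[L]=p\ell_1+(1-p)\ell_{\neg},
\]
and
\[
\mathcal{H}=p\,(C_1-\lambda\ell_1)+(1-p)\,(C_{\neg}-\lambda\ell_{\neg}).
\]
Hence $\partial\mathcal{H}/\partial p=(C_1-C_{\neg})-\lambda(\ell_1-\ell_{\neg})$. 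Because $\mathcal{H}$ is affine in $p$, a negative slope means every decrease of $p$ strictly increases $\mathcal{H}$, not only infinitesimal ones.

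\textbf{Step 2 (rewrite in the quantities of the statement).} From the mixture identity for $\mathbb{E}[L]$ we get $\ell_1-\mathbb{E}[L]=(1-p)(\ell_1-\ell_{\neg})$. Since $0<p<1$, this gives $\ell_1-\ell_{\neg}=(\ell_1-\mathbb{E}[L])/(1-p)$. The left-hand side of the hypothesis is exactly $\ell_1-\mathbb{E}[L]$.

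\textbf{Step 3 (bound the correctness gain).} Use $C_{\neg}\ge 0$ together with $P_\theta(\mathcal{A}\mid x)\ge pC_1$. Together these give $C_1-C_{\neg}\le C_1\le P_\theta(\mathcal{A}\mid x)/p$. Combining with Step 2,
\[
\frac{\partial\mathcal{H}}{\partial p}\le \frac{P_\theta(\mathcal{A}\mid x)}{p}-\frac{\lambda\,(\ell_1-\mathbb{E}[L])}{1-p}.
\]
Multiply the hypothesis by $\lambda/(1-p)>0$. It reads $\lambda(\ell_1-\mathbb{E}[L])/(1-p)>P_\theta(\mathcal{A}\mid x)/p$, so the slope is strictly negative, which is the claim.

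The algebra is routine. The main obstacle is making the perturbation class precise enough for the derivative to be meaningful. First, $z_1$ is generally a prefix of many traces, so $p$ must be treated as the marginal mass of that prefix event. Second, the continuation and alternative conditionals must stay fixed, which the phrase "redistributing the removed mass" leaves implicit. I would state this proportional-redistribution convention explicitly at the start. I would also remark that the bound in Step 3 is deliberately crude, since it drops $C_{\neg}$. This is why the condition is only sufficient, and why it is easiest to satisfy when $p$ is close to $1$ and $L(z_1)$ is large.
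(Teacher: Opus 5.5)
Your proposal is correct and follows essentially the same route as the paper's proof. The paper also splits $\mathcal{H}$ into the branch beginning with $z_1$ and its complement, and reduces suboptimality to the branch comparison $C_1-\lambda\ell_1<C_{\neg}-\lambda\ell_{\neg}$, which is exactly your negative-slope condition. It then closes with the same bound $P_\theta(\mathcal{A}\mid x,z_1)\le P_\theta(\mathcal{A}\mid x)/\pi_\theta(z_1\mid x)$.

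The differences are only presentational. Your identity $\ell_1-\ell_{\neg}=(\ell_1-\mathbb{E}[L])/(1-p)$ replaces the paper's longer rewriting of conditional expectations. Your explicit proportional-redistribution convention and your remark that $\mathcal{H}$ is affine in $p$ make precise what the paper leaves implicit.
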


The proof and further discussion are provided in Appendix~\ref{Apx:pf} and \ref{Apx:prop_discussion}.
Proposition~\ref{prop1} implies a sufficient condition under which allocating probability mass to a segment $z_1$ is suboptimal under the correctness-length objective.
In particular, the condition becomes easier to satisfy when $z_1$ is length-increasing and generated with high probability: longer $z_1$ directly enlarges the left-hand side through the additive term $L(z_1)$, while higher $\pi_\theta(z_1\mid x)$ shrinks the right-hand side.
Consequently, long yet high-confidence stretches are natural candidates for suppression.
This observation motivates a simple yet principled inductive bias for efficient reasoning: suppress segments that are both length-increasing and near-deterministic.

\subsection{Segment-aware Trimming via Reward Shaping}
\label{sec:trim}

Guided by the theoretical insight in Proposition~\ref{prop1}, our goal is to discourage \emph{redundant} portions of the reasoning trace that are locally high-probability under the current policy and persist for a nontrivial span.
We implement this idea with a simple segment-level reward shaping rule that detects and penalizes \emph{long high-probability stretches} using a sliding-window proxy.

\paragraph{Sliding-window high-probability count.}
For each rollout, we consider the CoT token sequence $z=(z_1,\ldots,z_T)$ sampled from the policy $\pi_{\text{old}}$.
We traverse $z$ with a fixed sliding window of size $w$.
For each window starting at position $t$, its joint probability under $\pi_{\text{old}}$ is
\begin{equation}
\pi_{\text{old}}(z_{t:t+w-1}\mid x, z_{<t})=
\prod_{j=t}^{t+w-1}\pi_{\text{old}}(z_j\mid x, z_{<j}),
\end{equation}
where $t=1,\ldots,T-w+1$. We count the number of windows whose joint probability exceeds a threshold $\tau$:
\begin{equation}
\label{equ:slat_count}
C(z)=\sum_{t=1}^{T-w+1}\mathbb{I}\!\left[\pi_{\text{old}}(z_{t:t+w-1}\mid x, z_{<t})\ge \tau\right],
\end{equation}
and use this count to define the efficient-reasoning reward. 

Crucially, Eq.~\eqref{equ:slat_count} is designed to operationalize Proposition~\ref{prop1}: the overlapping window count $C(z)$ is a direct, computable proxy for identifying stretches that are simultaneously \emph{high-probability} and \emph{long}.
If a redundant high-probability stretch of length $L$ exists ($L>w$), then it induces $L-w+1$ \emph{overlapping} windows whose probabilities all exceed the threshold. Hence $C(z)$ grows roughly \emph{linearly} with the length of such a stretch, serving as a practical surrogate for the theoretical condition.

\paragraph{Correctness-gated trimming reward.}
The trimming reward is applied only to positive rollouts (i.e., those judged correct by the rule-based verifier), following a common design choice in prior efficiency-oriented RL.
Formally, we define
\begin{equation}
r_{\mathrm{ER}}(z)=
\begin{cases}
-\,C(z), & \text{if the rollout is correct},\\
0, & \text{otherwise}.
\end{cases}
\end{equation}

This choice prevents the efficiency term from encouraging degenerate shortcuts on incorrect trajectories and focuses the trimming pressure on reducing computational inefficiency among correct solutions.

\paragraph{Reward composition.}
We combine the original reward and the trimming reward as
\begin{equation}
r = r_{\mathrm{ori}} + \lambda\, r_{\mathrm{ER}},
\end{equation}
where $\lambda$ controls the strength of the trimming signal.

\paragraph{Practical notes.}
Although \textsc{SLAT} involves $(w,\tau,\lambda)$, in practice it behaves like a \textbf{one knob method}. 
These hyperparameters all serve the same purpose, controlling how strongly we trim and compress the reasoning trace. 
In practice, we fix $\tau=0.9$ throughout and use $w$ to control how aggressive the detector is. 
A larger $w$ is more selective and flags only long high probability stretches. 
We set $\lambda=0.001$, so that $\lambda r_{\mathrm{ER}}$ is typically about one order of magnitude smaller than the scale of $r_{\mathrm{ori}}$, to keep the trimming term a gentle regularizer.
We implement this segment-level adaptive trimming scheme as \textsc{SLAT} (\textbf{S}egment \textbf{L}evel \textbf{A}daptive \textbf{T}rimming). 

\definecolor{MyBlue}{RGB}{31,119,180}   
\definecolor{MyRed}{RGB}{214,39,40}    

\newcommand{\diffa}[1]{\color{MyRed}#1\%}
\newcommand{\diffl}[1]{\color{MyBlue}#1\%}

\section{Experiment}

We conduct experiments primarily on mathematical reasoning benchmarks, reporting both correctness and reasoning length under a unified decoding protocol.
We consider two main settings: compressing CoTs on a strong distilled reasoner (\texttt{DeepSeek-R1-Distill-Qwen-7B}) and training from a base model (\texttt{Qwen2.5-Math-7B}) for reasoning while controlling CoT length.
We also report extensions to additional backbones (\texttt{Qwen3-14B}) and additional analyses on out-of-domain generalization and training efficiency.

\subsection{Setting}
\label{sec:exp_setting}

\paragraph{Models.}
For the distilled-reasoner setting, we apply \textsc{SLAT} to \texttt{DeepSeek-R1-Distill-Qwen-7B}~\cite{guo2025deepseek} and benchmark it against state-of-the-art released models trained on the same base model by directly evaluating their publicly available checkpoints under the same protocol.
For the base-model training setting, we train \texttt{Qwen2.5-Math-7B}~\citep{yang2024qwen25mathtechnicalreportmathematical} with GRPO and compare \textsc{SLAT} to representative length-reward designs, including \texttt{Vanilla Truncation} and the objectives proposed in \texttt{Kimi-k1.5} and \texttt{TMLRE}, all implemented under the same GRPO recipe.
To assess generalization to larger models, we additionally experiment with \texttt{Qwen3-14B}~\cite{yang2025qwen3}.

\paragraph{Evaluation}
We focus on mathematical reasoning and evaluate on five benchmarks: 
MATH500~\citep{hendrycks2021measuring}, OlympiadBench~\citep{he2024olympiadbench}, AMC23~\citep{AMC23}, AIME24\&25~\citep{AIME}.
For all models, we sample with temperature $0.7$ and top-p=1.
We report accuracy as avg@4 (avg@k means the average correctness score over k samples) on MATH500 and Olympiad Bench, and as avg@16 on AMC23, AIME24, and AIME25 due to their limited number of test problems.
In addition, we report CoT length statistics to measure reasoning efficiency.
To assess out-of-domain generalization, we additionally evaluate on MMLU-STEM~\cite{hendrycks2020measuring} and GPQA-Diamond~\cite{rein2024gpqa} datasets.
Other evaluation details are provided in Appendix~\ref{Apx:Implement}.

\paragraph{Training Setup}
All training runs are conducted on NVIDIA A100 GPUs using the \textbf{verl}~\cite{sheng2025hybridflow} framework,
with \textsc{DAPO-Math-17k}~\cite{yu2025dapo} as the training dataset.
For \textsc{SLAT}, we follow the GRPO recipe in Section~\ref{sec:preliminary} with minor modifications and incorporate the efficient-reasoning reward defined in Section~\ref{sec:trim}.
Full hyperparameters and implementation details are deferred to Appendix~\ref{Apx:Implement}.
\subsection{Main Results}
\label{sec:exp_main}

\begin{table*}[!h]
    \centering
    \caption{Performance comparison of methods based on \texttt{DeepSeek-R1-Distill-Qwen-7B} (denoted as Original). We report accuracy (Acc.) and average token length (Len.) across datasets. Baselines utilize official checkpoints evaluated under a unified protocol. The final row quantifies the relative change (\%) of \textsc{SLAT} compared to the original model.}
    \begin{NiceTabular}{lccccccccccccc}[code-before=\rowcolors{10}{lightblue}{lightblue}]
        \toprule
          & \multicolumn{2}{c}{\textbf{\makecell{MATH\\500}}}
          & \multicolumn{2}{c}{\textbf{\makecell{Olympiad\\Bench}}}
          & \multicolumn{2}{c}{\textbf{AMC23}}
          & \multicolumn{2}{c}{\textbf{AIME24}}
          & \multicolumn{2}{c}{\textbf{AIME25}} 
          & \multicolumn{3}{c}{\textbf{Avg.}} \\
          & \textbf{Acc.} & \textbf{Len.}
          & \textbf{Acc.} & \textbf{Len.}
          & \textbf{Acc.} & \textbf{Len.}
          & \textbf{Acc.} & \textbf{Len.}
          & \textbf{Acc.} & \textbf{Len.}
          & \textbf{Acc.$\uparrow$} & \textbf{Len.$\downarrow$} 
          \\
        \midrule
        Original & 91.9 & 3860 & \underline{59.1} & 8093 & 87.8 & 6001 & \underline{53.8} & 11731 & \textbf{37.6} & 12334 & 66.0 & 8404 \\
        LC-R1 & 89.8 & \textbf{1583} & 57.3 & 4156 & 84.8 & 3045 & 52.3 & 6835 & 35.9 & 7761 & 64.0 & 4676 \\
        AdaptThink & 91.6 & 1893 & \underline{59.1} & 5945 & 85.3 & 3740 & \textbf{54.2} & 9565 & 37.4 & 10217 & 65.5 & 6272 \\
        DAST & \underline{92.4} & 3228 & 58.3 & 7390 & \underline{88.6} & 5224 & 54.3 & 10924 & 36.9 & 11552 & \underline{66.1} & 7664 \\ 
        TLMRE$_{\alpha=0.1}$ & 91.2 & 2662 & 58.0 & 6289 & 87.8 & 4405 & 52.8 & 9454 & 36.5 & 10650 & 65.3 & 6692 \\
        TLMRE$_{\alpha=0.2}$ & 90.7 & 2312 & 57.2 & 5679 & 88.3 & 4135 & 51.2 & 8987 & 36.9 & 9478 & 64.9 & 6118 \\
        L1-Max & 90.4 & 2134 & 55.8 & \textbf{2854} & 85.0 & \textbf{2626} & 42.9 & \textbf{4135} & 31.7 & \textbf{3970} & 61.2 & \textbf{3144} \\
        SLAT & \textbf{92.8} & \underline{1820} & \textbf{59.6} & \underline{3714} & \textbf{89.1} & \underline{2857} & 52.9 & \underline{6105} & \textbf{37.6} & \underline{6385} & \textbf{66.4} & \underline{4176} \\
        & \diffa{+1} & \diffl{-53} & \diffa{+1} & \diffl{-54} & \diffa{+1.5} & \diffl{-52} & \diffa{-1.7} & \diffl{-48} & \diffa{+0} & \diffl{-48} & \diffa{+0.6} & \diffl{-50} \\
      \bottomrule
      \end{NiceTabular}
    \label{tab:main_exp_DS7B}
\end{table*}

\subsubsection{Compressing CoTs on Distilled Models}
\label{sec:exp_main_r1}
We study the setting of compressing CoTs on a strong distilled reasoner, where the model already exhibits strong reasoning behavior and the goal is to improve efficiency without sacrificing correctness.
We compare \textsc{SLAT} against several representative released baselines, including \texttt{LC-R1}~\cite{cheng2025optimizing},
\texttt{TLMRE} with different compression coefficients $\alpha$ ~\cite{arora2025training}, 
\texttt{AdaptThink}~\cite{zhang2025adaptthink}, \texttt{DAST}~\cite{shen2025dast} and \texttt{L1-Max}~\cite{aggarwal2025l1}.
Table~\ref{tab:main_exp_DS7B} reports the accuracy-length results, where 
\textsc{SLAT} consistently improves both correctness and efficiency, outperforming $6$ of $7$ baselines with higher accuracy and shorter CoTs, i.e., a Pareto improvement on average in the accuracy-length trade-off.
In particular, \textsc{SLAT} attains the best average accuracy ($66.4$) while reducing the average CoT length from $8404$ to $4176$ tokens (a $50\%$ reduction) compared to the original model.
Relative to the baselines \texttt{LC-R1}, \texttt{TLMRE}, and \texttt{AdaptThink}, \textsc{SLAT} improves average accuracy by up to $2.4$ points while reducing CoT length. 
\texttt{DAST} is competitive in accuracy but offers only modest length savings.
Furthermore, compared to the aggressive compression baseline \texttt{L1-Max}, \textsc{SLAT} demonstrates a superior accuracy-efficiency trade-off, achieving significantly higher accuracy (+5.2 points) with only a moderate increase in sequence length.

\subsubsection{Compressing CoTs in Base-Model Training}
\label{sec:exp_main_q2.5}

\begin{figure*}[!ht]
  \begin{center}
    \centerline{\includegraphics[width=\linewidth]{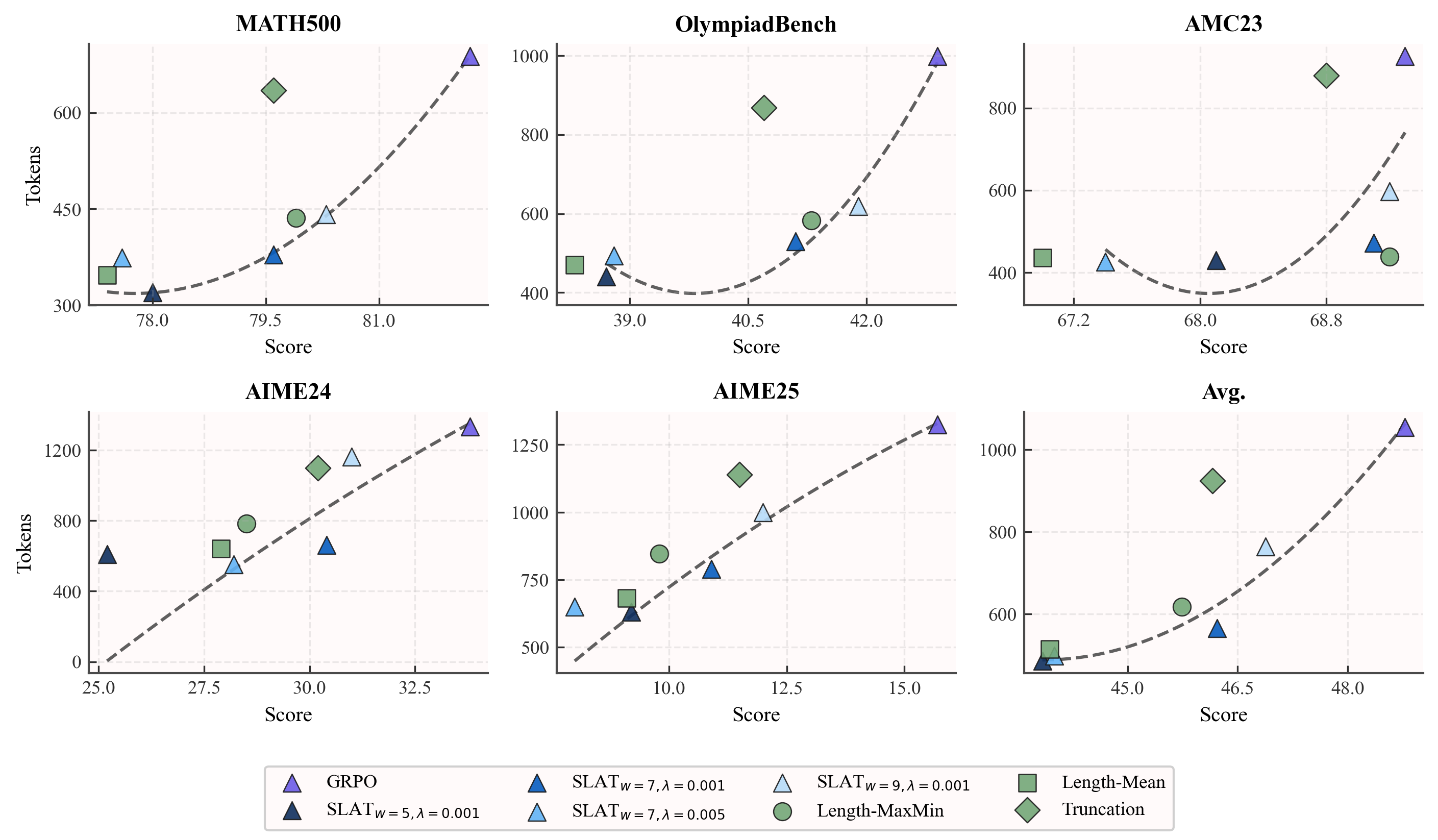}}
    \caption{
    Accuracy-length trade-off on math benchmarks for models trained from the same base model \texttt{Qwen2.5-Math-7B} under different training objectives. 
    Each point corresponds to a specific training objective, including \textsc{SLAT} variants obtained by varying the window size $w$ and coefficient $\lambda$. 
    Across benchmarks, \textsc{SLAT} typically provides a more favorable accuracy-length profile than token-uniform length objectives, especially on harder datasets.}
    \label{fig:qwen25_tradeoff}
  \end{center}
\end{figure*}

We next study a controlled setting on \texttt{Qwen2.5-Math-7B}, where reasoning ability is learned via RL and efficiency is also shaped during training.
As length-oriented baselines, we implement representative length-objective (reward) designs from prior work, including vanilla \texttt{Truncation} and the objective provided in \texttt{Kimi-k1.5}~\cite{team2025kimi} (denoted as \texttt{Length-MaxMin}) and \texttt{TLMRE} (denoted as \texttt{Length-Mean}), and apply them within the same GRPO training recipe for a fair comparison.
For \textsc{SLAT}, we vary the window size $w$ and the weighting coefficient $\lambda$ to control the trimming strength.
Figure~\ref{fig:qwen25_tradeoff} visualizes the accuracy-length trade-off on \texttt{Qwen2.5-Math-7B} across math benchmarks, with detailed numerical results provided in Appendix~\ref{Apx:results_Qwen2.5-Math-7B}.
While GRPO maximizes accuracy, it results in inefficient, lengthy chains.
\textsc{SLAT} establishes a superior trade-off curve, allowing precise modulation of inference cost versus performance via $w$ and $\lambda$.
Among these baselines, \texttt{Truncation} performs relatively poorly across benchmarks, suggesting that naive truncation alone is insufficient for balancing accuracy and length during reasoning training.
In particular, length-based objectives can work well on simpler or medium-difficulty datasets, for example \texttt{Length-MaxMin} performs strongly on AMC23.
However, on higher-difficulty benchmarks such as AIME24 and AIME25, \textsc{SLAT} tends to offer clearer advantages, suggesting that global penalties disrupt long-horizon reasoning dependencies, whereas \textsc{SLAT}'s local, segment-aware trimming preserves the critical logical structures required for complex problem solving.

\subsection{Additional Analyses}
\label{sec:exp_additional}

\subsubsection{Extension to larger model}
\label{sec:exp_qwen3}

Table~\ref{tab:exp_Q3-14B} reports results on \texttt{Qwen3-14B}.
As shown in the table, applying GRPO substantially improves mathematical accuracy on this larger base model but also leads to much longer CoTs (e.g., average length $5024$ tokens).
When adding \textsc{SLAT} on top of GRPO, we obtain a markedly more efficient model: the average CoT length is reduced from $5024$ to $2677$ tokens ($-47\%$) while keeping accuracy nearly unchanged on average ($-0.5\%$).
Notably, on simpler to medium-difficulty benchmarks, \textsc{SLAT} can compress CoTs without hurting correctness and even improves performance on AMC23 ($+3.4\%$ with a $-61\%$ length reduction), indicating that \textsc{SLAT} scales well to larger base models.

\subsubsection{Generalization Beyond Math}
\label{sec:exp_ood}

While our primary focus is mathematical reasoning, we additionally evaluate cross-domain generalization on MMLU-STEM and GPQA-Diamond for models trained on \texttt{DeepSeek-R1-Distill-Qwen-7B}.
As shown in Table~\ref{tab:exp_beyond_math} in Appendix, on MMLU-STEM, \textsc{SLAT} matches the strongest baseline in accuracy while achieving the shortest CoT length, and on GPQA-Diamond, it attains the competitive accuracy among the compared methods with a clear length reduction compared to the original checkpoint.
These results suggest that \textsc{SLAT} generalizes reasonably well beyond mathematical reasoning.

\subsubsection{Training Efficiency}
\label{sec:exp_efficiency}

We further analyze the impact of \textsc{SLAT} on training time.
Beyond its modest per-step overhead, \textsc{SLAT} can reduce overall training time by encouraging shorter rollouts in later training stages, thereby reducing generation cost and improving training throughput.
Detailed throughput and wall-clock comparisons are provided in Figure~\ref{apx:fig_training_eff}.
As shown in the figure, GRPO exhibits relatively stable per-step time and maintains longer generations as training proceeds.
In contrast, \textsc{SLAT} drives a sustained reduction in response length in later stages, which in turn lowers the per-step wall-clock time and yields faster training overall.

\begin{figure*}[ht]
  \begin{center}
    \centerline{\includegraphics[width=\linewidth]{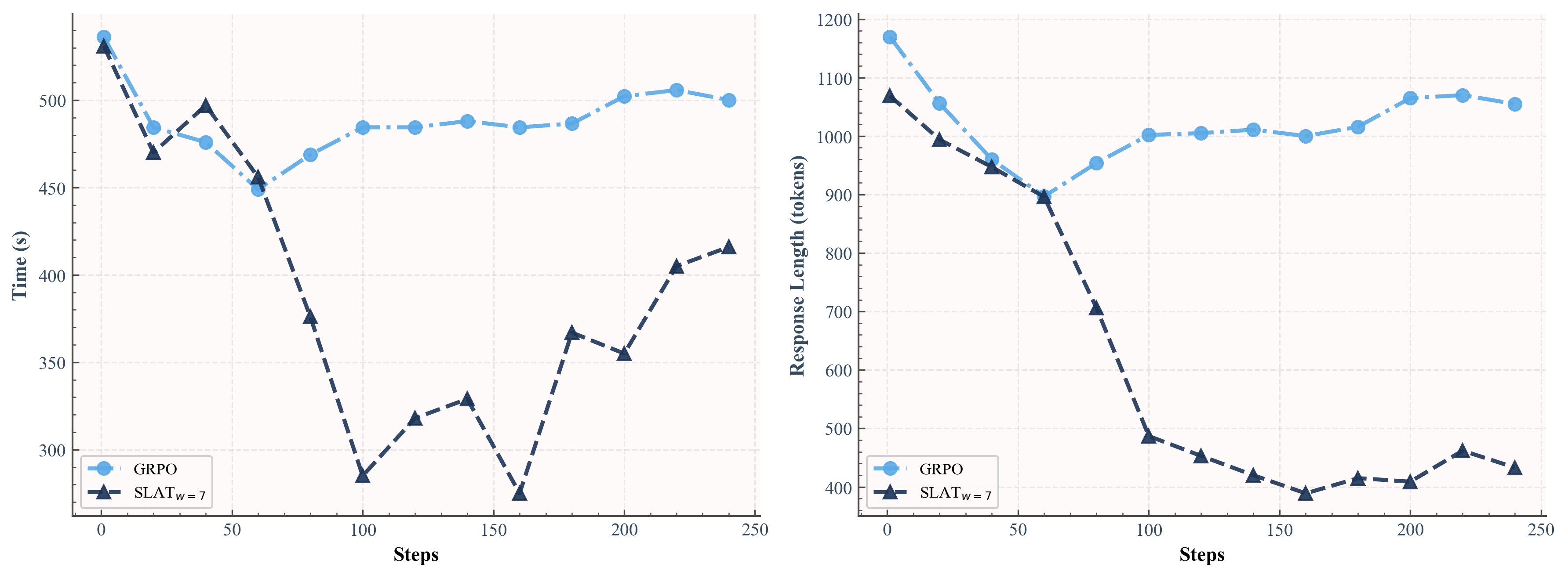}}
    \caption{Training efficiency comparison between vanilla GRPO and \textsc{SLAT}. We plot the per-step wall-clock time (top) and the average response length (bottom) over training steps, showing that \textsc{SLAT} encourages shorter rollouts in later stages and consequently reduces training time.}
    \label{apx:fig_training_eff}
  \end{center}
\end{figure*}

\subsubsection{Qualitative Analysis of $2+3$}
\label{sec:qualitative}

To complement our quantitative results, we provide a qualitative comparison between the original distilled reasoner (\texttt{DeepSeek-R1-Distill-Qwen-7B}) and its \textsc{SLAT}-trained counterpart.
As illustrated in Appendix~\ref{Apx:er_case} (Fig.~\ref{fig:case_qualitative}). 
The baseline model often exhibits \emph{overthinking} even on trivial queries: after reaching the correct computation, it continues with verbose scaffolding (restating the problem, invoking generic principles and enumerating step-by-step instructions).
In contrast, \textsc{SLAT} produces a substantially shorter trace that preserves the essential reasoning and terminates promptly once the answer is determined.
This qualitative behavioral analysis corroborates our primary hypothesis that \textsc{SLAT} enhances inference efficiency while maintaining logical fidelity. We also provide qualitative analysis on question from MATH500 in  Appendix~\ref{Apx:er_case}.

\begin{table*}[h!]
    \centering
    \caption{Extension of \textsc{SLAT} to the training of \texttt{Qwen3-14B}. Accuracy (Acc.) and average token length (Len.) are reported for each dataset. The last row additionally reports the relative change (\%) of \textsc{SLAT} with respect to GRPO.
    }
    \begin{NiceTabular}{lccccccccccccc}[code-before=\rowcolors{5-6}{lightblue}{lightblue}]
        \toprule
          & \multicolumn{2}{c}{\textbf{\makecell{MATH\\500}}}
          & \multicolumn{2}{c}{\textbf{\makecell{Olympiad\\Bench}}}
          & \multicolumn{2}{c}{\textbf{AMC23}}
          & \multicolumn{2}{c}{\textbf{AIME24}}
          & \multicolumn{2}{c}{\textbf{AIME25}} 
          & \multicolumn{3}{c}{\textbf{Avg.}} \\
          & \textbf{Acc.} & \textbf{Len.}
          & \textbf{Acc.} & \textbf{Len.}
          & \textbf{Acc.} & \textbf{Len.}
          & \textbf{Acc.} & \textbf{Len.}
          & \textbf{Acc.} & \textbf{Len.}
          & \textbf{Acc.$\uparrow$} & \textbf{Len.$\downarrow$} 
          \\
        \midrule
        Qwen3-14B & 65.1 & 655 & 35.8 & 1084 & 56.7 & 863 & 28.8 & 1641 
        & 7.5 & 1835 & 38.8 & 1216 \\
        + GRPO & 87.1 & 2978 & 55.2 & 4824 & 82.8 & 4317 & 46.5 & 6362 
        & 33.1 & 6638 &  60.9 & 5024 \\
        + SLAT  & 86.2 & 901 & 54.8 & 2106 & 85.6 & 1704 & 45.0 & 4198 & 31.6 & 4477 & 60.6 & 2677 \\
        & \diffa{-1.0} & \diffl{-70} & \diffa{-0.7} & \diffl{-56} & \diffa{+3.4} & \diffl{-61} & \diffa{-3.2} & \diffl{-34} & \diffa{-4.5} & \diffl{-33} & \diffa{-0.5} & \diffl{-47} \\
      \bottomrule
      \end{NiceTabular}
    \label{tab:exp_Q3-14B}
\end{table*}

\section{Related Work}
\paragraph{Overthinking and Halting in Chain-of-Thought Reasoning}
CoT prompting markedly improves reasoning by externalizing intermediate steps \citep{wei2022chain}, but it can also induce redundancy and overthinking, where models keep elaborating after a valid solution emerges \citep{chen2024not}. 
Consequently, a stream of when-to-stop methods seeks to cut reasoning at the point of diminishing returns.
\citet{laaouach2025halt} stops when token-level uncertainty stabilizes, saving tokens on arithmetic tasks.
\citet{wei2025stop} halt at the first coherent solution boundary to avoid trailing repetition.
Beyond halting, compression-based approaches shorten the trace itself: TokenSkip skips low-importance tokens for controllable CoT compression \citep{xia2025tokenskip}, while C3oT produces condensed rationales via compression-conditioned prompting \citep{kang2025c3ot}. 
Recent analyses further show that overly long CoTs can propagate errors and hurt performance on easier instances \citep{yeo2025demystifying}, and surveys summarize the broader efficiency-accuracy tradeoff \citep{sui2025stop,feng2025efficient}. 
Collectively, these findings frame efficient reasoning as knowing when to stop, not merely thinking longer.

\paragraph{Reinforcement Learning for Chain-of-Thought Optimization}
RL is increasingly used to shape the reasoning process itself, with efficiency encoded directly in the objective rather than added post hoc. 
We group prior work into three lines.

\emph{Length aware objectives} make brevity explicit in training. 
ThinkPrune\cite{hou2025thinkprune} casts reasoning as a budgeted objective, where over-budget traces receive zero credit, and a curriculum of shrinking caps teaches concise completion. 
Conversely, O1-Pruner \citep{luo2025o1} and LC-R1 \cite{cheng2025optimizing} incorporate length as a soft regularization term, optimizing the Pareto frontier between accuracy and token efficiency. 
Similarly, controllable frameworks like L1 \citep{aggarwal2025l1} and LASER \citep{liu2025learn} parameterize sequence length as a conditional target to enforce strict budget adherence.

\emph{Information or uncertainty-driven trimming and halting} utilizes entropy dynamics to govern inference capability. 
From an uncertainty-based perspective, step-level entropy identifies low-information segments that can be skipped~\citep{li2025compressing}, and cumulative entropy suggests stopping once uncertainty plateaus~\citep{jiang2025explore}. 

\emph{Adaptive allocation of compute} rests on the observation that different instances require different amounts of reasoning, 
with supporting analyses such as token complexity, which formalizes a minimal token budget per instance \citep{lee2025well}.
Building on these signals, adaptive methods adjust computation to the instance or to external intent. 
Instance adaptive methods modulate pressure by difficulty so that easy inputs receive less computation while hard inputs retain deeper reasoning \citep{ling2025fast,shen2025dast,zhang2025adaptthink,arora2025training}. 
Preference-based variants encourage correct yet shorter traces through comparative feedback \citep{yang2025think}. 
Controllable formulations align user intent with compute, for example by learning discrete reasoning regimes, as in ThinkDial \citep{he2025thinkdial}, or by selecting among output formats, as in ARM \citep{wu2025arm}. 
While such user-steerable interfaces are practical for deployment, they rely on external intervention and thus fall short of \emph{autonomous} efficiency, ultimately contradicting the goal of achieving fully autonomous AI.

Situated within this landscape, we adopt a structure-aware view of efficiency. Rather than penalizing length uniformly, we analyze the CoT at the \emph{segment level} and establish, under an explicit objective that balances correctness and length, a sufficient condition that identifies high-probability, low-information segments as suboptimal. This theoretical lens motivates a trimming bias within reinforcement learning, where the policy is encouraged to suppress segments that satisfy the condition while preserving decision-relevant steps. The formulation offers a localized intervention guided by an explicit condition, complementing sequence-level length control and uncertainty-based heuristics without prescribing a fixed budget or mode.

\section{Conclusion and Future Work}
\label{sec:conclusion}

We studied efficient Chain-of-Thought reasoning through the lens of segment-level redundancy.
Motivated by qualitative observations of over-elaboration in long traces, we formulated an explicit correctness-length objective and derived a sufficient condition suggesting that certain long, near-deterministic segments are unlikely to be favored under an efficiency-optimal policy.
Based on this insight, we proposed \textsc{SLAT}, a simple yet effective reward-shaping approach that penalizes high-probability stretches during RL training to suppress redundant computation while maintaining correctness.
Across strong distilled reasoners and base models trained with GRPO, \textsc{SLAT} yields a consistently better accuracy-length profile on mathematical reasoning benchmarks. 
Our extensions to larger base models and additional domains further suggest that the approach remains effective as model capacity and task scope increase.

There are several promising directions for future work.
First, while \textsc{SLAT} is already effective with a fixed sliding-window criterion, incorporating more adaptive mechanisms to identify redundant stretches could further improve robustness and reduce reliance on manual hyperparameter choices.
Second, an important next step is to scale \textsc{SLAT} to substantially larger model families (100B+ parameters) and to evaluate whether the same accuracy-length gains persist at scale.
It would also be valuable to examine whether \textsc{SLAT} generalizes to broader reasoning scenarios, such as domain-specialized DeepResearch agents~\citep{wang2026deepmed}.
Third, beyond efficiency and accuracy, the safety implications of reasoning compression deserve further investigation.
Although \textsc{SLAT} targets redundant reasoning segments rather than model parameters or visual tokens, understanding whether training-time trimming affects adversarial robustness, information leakage, or model extraction risks is important for safe deployment~\citep{zhang2025mer,zhang2026adversarial}.
Finally, combining training-time trimming with complementary inference-time efficiency techniques may yield additional speedups without sacrificing correctness.

\section*{Acknowledgments}
This work was supported in part by the Research
Grants Council of the Hong Kong SAR (Grant
No. C5052-23G, PolyU15229824, SRFS2526-
5S04), and The Hong Kong Polytechnic University
(Project IDs: P0051130, P0060651, P0058445).
This work was supported in part by Guangdong Basic and Applied Basic Research Foundation (No. 2024B1515020019).

\section*{Impact Statement}
This paper presents work whose goal is to advance the field of LLM. There are many potential societal consequences of our work, none of which we feel must be specifically highlighted here.

\bibliography{example_paper}

@article{wei2025stop,
  title={Stop spinning wheels: Mitigating llm overthinking via mining patterns for early reasoning exit},
  author={Wei, Zihao and Pang, Liang and Liu, Jiahao and Deng, Jingcheng and Xu, Shicheng and Duan, Zenghao and Wang, Jingang and Sun, Fei and Cai, Xunliang and Shen, Huawei and others},
  journal={arXiv preprint arXiv:2508.17627},
  year={2025}
}

@article{yeo2025demystifying,
  title={Demystifying long chain-of-thought reasoning in llms},
  author={Yeo, Edward and Tong, Yuxuan and Niu, Morry and Neubig, Graham and Yue, Xiang},
  journal={arXiv preprint arXiv:2502.03373},
  year={2025}
}

@inproceedings{laaouach2025halt,
  title={Halt-cot: Model-agnostic early stopping for chain-of-thought reasoning via answer entropy},
  author={Laaouach, Yassir},
  booktitle={4th Muslims in ML Workshop co-located with ICML 2025},
  year={2025}
}

@article{sui2025stop,
  title={Stop overthinking: A survey on efficient reasoning for large language models},
  author={Sui, Yang and Chuang, Yu-Neng and Wang, Guanchu and Zhang, Jiamu and Zhang, Tianyi and Yuan, Jiayi and Liu, Hongyi and Wen, Andrew and Zhong, Shaochen and Zou, Na and others},
  journal={arXiv preprint arXiv:2503.16419},
  year={2025}
}

@article{feng2025efficient,
  title={Efficient reasoning models: A survey},
  author={Feng, Sicheng and Fang, Gongfan and Ma, Xinyin and Wang, Xinchao},
  journal={arXiv preprint arXiv:2504.10903},
  year={2025}
}

@article{hou2025thinkprune,
  title={Thinkprune: Pruning long chain-of-thought of llms via reinforcement learning},
  author={Hou, Bairu and Zhang, Yang and Ji, Jiabao and Liu, Yujian and Qian, Kaizhi and Andreas, Jacob and Chang, Shiyu},
  journal={arXiv preprint arXiv:2504.01296},
  year={2025}
}

@article{luo2025o1,
  title={O1-pruner: Length-harmonizing fine-tuning for o1-like reasoning pruning},
  author={Luo, Haotian and Shen, Li and He, Haiying and Wang, Yibo and Liu, Shiwei and Li, Wei and Tan, Naiqiang and Cao, Xiaochun and Tao, Dacheng},
  journal={arXiv preprint arXiv:2501.12570},
  year={2025}
}

@article{li2025compressing,
  title={Compressing chain-of-thought in llms via step entropy},
  author={Li, Zeju and Zhong, Jianyuan and Zheng, Ziyang and Wen, Xiangyu and Xu, Zhijian and Cheng, Yingying and Zhang, Fan and Xu, Qiang},
  journal={arXiv preprint arXiv:2508.03346},
  year={2025}
}

@article{lee2025well,
  title={How well do llms compress their own chain-of-thought? a token complexity approach},
  author={Lee, Ayeong and Che, Ethan and Peng, Tianyi},
  journal={arXiv preprint arXiv:2503.01141},
  year={2025}
}

@article{he2025thinkdial,
  title={Thinkdial: An open recipe for controlling reasoning effort in large language models},
  author={He, Qianyu and Yuan, Siyu and Li, Xuefeng and Wang, Mingxuan and Chen, Jiangjie},
  journal={arXiv preprint arXiv:2508.18773},
  year={2025}
}

@article{wei2022chain,
  title={Chain-of-thought prompting elicits reasoning in large language models},
  author={Wei, Jason and Wang, Xuezhi and Schuurmans, Dale and Bosma, Maarten and Xia, Fei and Chi, Ed and Le, Quoc V and Zhou, Denny and others},
  journal={Advances in neural information processing systems},
  volume={35},
  pages={24824--24837},
  year={2022}
}

@article{chen2024not,
  title={Do not think that much for 2+ 3=? on the overthinking of o1-like llms},
  author={Chen, Xingyu and Xu, Jiahao and Liang, Tian and He, Zhiwei and Pang, Jianhui and Yu, Dian and Song, Linfeng and Liu, Qiuzhi and Zhou, Mengfei and Zhang, Zhuosheng and others},
  journal={arXiv preprint arXiv:2412.21187},
  year={2024}
}

@article{ling2025fast,
  title={Fast on the Easy, Deep on the Hard: Efficient Reasoning via Powered Length Penalty},
  author={Ling, Zehui and Chen, Deshu and Zhang, Hongwei and Jiao, Yifeng and Guo, Xin and Cheng, Yuan},
  journal={arXiv preprint arXiv:2506.10446},
  year={2025}
}

@article{yang2025think,
  title={Think when you need: Self-adaptive chain-of-thought learning},
  author={Yang, Junjie and Lin, Ke and Yu, Xing},
  journal={arXiv preprint arXiv:2504.03234},
  year={2025}
}

@article{jiang2025explore,
  title={Explore Briefly, Then Decide: Mitigating LLM Overthinking via Cumulative Entropy Regulation},
  author={Jiang, Tianyi and Bin, Yi and Ding, Yujuan and Zhu, Kainian and Ma, Fei and Song, Jingkuan and Shen, Heng Tao},
  journal={arXiv preprint arXiv:2510.02249},
  year={2025}
}

@article{shao2024deepseekmath,
  title={Deepseekmath: Pushing the limits of mathematical reasoning in open language models},
  author={Shao, Zhihong and Wang, Peiyi and Zhu, Qihao and Xu, Runxin and Song, Junxiao and Bi, Xiao and Zhang, Haowei and Zhang, Mingchuan and Li, YK and Wu, Yang and others},
  journal={arXiv preprint arXiv:2402.03300},
  year={2024}
}

@article{guo2025deepseek,
  title={Deepseek-r1 incentivizes reasoning in llms through reinforcement learning},
  author={Guo, Daya and Yang, Dejian and Zhang, Haowei and Song, Junxiao and Wang, Peiyi and Zhu, Qihao and Xu, Runxin and Zhang, Ruoyu and Ma, Shirong and Bi, Xiao and others},
  journal={Nature},
  volume={645},
  number={8081},
  pages={633--638},
  year={2025},
  publisher={Nature Publishing Group UK London}
}

@article{cheng2025optimizing,
  title={Optimizing Length Compression in Large Reasoning Models},
  author={Cheng, Zhengxiang and Chen, Dongping and Fu, Mingyang and Zhou, Tianyi},
  journal={arXiv preprint arXiv:2506.14755},
  year={2025}
}

@article{aggarwal2025l1,
  title={L1: Controlling how long a reasoning model thinks with reinforcement learning},
  author={Aggarwal, Pranjal and Welleck, Sean},
  journal={arXiv preprint arXiv:2503.04697},
  year={2025}
}

@article{liu2025learn,
  title={Learn to reason efficiently with adaptive length-based reward shaping},
  author={Liu, Wei and Zhou, Ruochen and Deng, Yiyun and Huang, Yuzhen and Liu, Junteng and Deng, Yuntian and Zhang, Yizhe and He, Junxian},
  journal={arXiv preprint arXiv:2505.15612},
  year={2025}
}

@article{wu2025arm,
  title={ARM: Adaptive Reasoning Model},
  author={Wu, Siye and Xie, Jian and Zhang, Yikai and Chen, Aili and Zhang, Kai and Su, Yu and Xiao, Yanghua},
  journal={arXiv preprint arXiv:2505.20258},
  year={2025}
}

@article{arora2025training,
  title={Training language models to reason efficiently},
  author={Arora, Daman and Zanette, Andrea},
  journal={arXiv preprint arXiv:2502.04463},
  year={2025}
}

@article{shen2025dast,
  title={Dast: Difficulty-adaptive slow-thinking for large reasoning models},
  author={Shen, Yi and Zhang, Jian and Huang, Jieyun and Shi, Shuming and Zhang, Wenjing and Yan, Jiangze and Wang, Ning and Wang, Kai and Liu, Zhaoxiang and Lian, Shiguo},
  journal={arXiv preprint arXiv:2503.04472},
  year={2025}
}

@article{zhang2025adaptthink,
  title={Adaptthink: Reasoning models can learn when to think},
  author={Zhang, Jiajie and Lin, Nianyi and Hou, Lei and Feng, Ling and Li, Juanzi},
  journal={arXiv preprint arXiv:2505.13417},
  year={2025}
}

@misc{openai-o1,
  title={Learning to reason with LLMs},
    author={OpenAI},
  howpublished={\url{https://openai.com/index/learning-to-reason-with-llms}},
  year={2024}
}

@article{team2025kimi,
  title={Kimi k1.5: Scaling reinforcement learning with llms},
  author={Team, Kimi and Du, Angang and Gao, Bofei and Xing, Bowei and Jiang, Changjiu and Chen, Cheng and Li, Cheng and Xiao, Chenjun and Du, Chenzhuang and Liao, Chonghua and others},
  journal={arXiv preprint arXiv:2501.12599},
  year={2025}
}

@article{zeng2025glm,
  title={Glm-4.5: Agentic, reasoning, and coding (arc) foundation models},
  author={Zeng, Aohan and Lv, Xin and Zheng, Qinkai and Hou, Zhenyu and Chen, Bin and Xie, Chengxing and Wang, Cunxiang and Yin, Da and Zeng, Hao and Zhang, Jiajie and others},
  journal={arXiv preprint arXiv:2508.06471},
  year={2025}
}

@article{seed2025seed1,
  title={Seed1. 5-thinking: Advancing superb reasoning models with reinforcement learning},
  author={Seed, ByteDance and Chen, Jiaze and Fan, Tiantian and Liu, Xin and Liu, Lingjun and Lin, Zhiqi and Wang, Mingxuan and Wang, Chengyi and Wei, Xiangpeng and Xu, Wenyuan and others},
  journal={arXiv preprint arXiv:2504.13914},
  year={2025}
}

@article{team2025gemma,
  title={Gemma 3 technical report},
  author={Team, Gemma and Kamath, Aishwarya and Ferret, Johan and Pathak, Shreya and Vieillard, Nino and Merhej, Ramona and Perrin, Sarah and Matejovicova, Tatiana and Ram{\'e}, Alexandre and Rivi{\`e}re, Morgane and others},
  journal={arXiv preprint arXiv:2503.19786},
  year={2025}
}

@article{yang2024qwen25mathtechnicalreportmathematical,
  title={Qwen2.5-Math Technical Report: Toward Mathematical Expert Model via Self-Improvement}, 
  author={An Yang and Beichen Zhang and Binyuan Hui and Bofei Gao and Bowen Yu and Chengpeng Li and Dayiheng Liu and Jianhong Tu and Jingren Zhou and Junyang Lin and Keming Lu and Mingfeng Xue and Runji Lin and Tianyu Liu and Xingzhang Ren and Zhenru Zhang},
  journal={arXiv preprint arXiv:2409.12122},
  year={2024}
}

@article{yang2025qwen3,
  title={Qwen3 technical report},
  author={Yang, An and Li, Anfeng and Yang, Baosong and Zhang, Beichen and Hui, Binyuan and Zheng, Bo and Yu, Bowen and Gao, Chang and Huang, Chengen and Lv, Chenxu and others},
  journal={arXiv preprint arXiv:2505.09388},
  year={2025}
}

@article{yu2025dapo,
  title={Dapo: An open-source llm reinforcement learning system at scale},
  author={Yu, Qiying and Zhang, Zheng and Zhu, Ruofei and Yuan, Yufeng and Zuo, Xiaochen and Yue, Yu and Dai, Weinan and Fan, Tiantian and Liu, Gaohong and Liu, Lingjun and others},
  journal={arXiv preprint arXiv:2503.14476},
  year={2025}
}

@inproceedings{sheng2025hybridflow,
  title={Hybridflow: A flexible and efficient rlhf framework},
  author={Sheng, Guangming and Zhang, Chi and Ye, Zilingfeng and Wu, Xibin and Zhang, Wang and Zhang, Ru and Peng, Yanghua and Lin, Haibin and Wu, Chuan},
  booktitle={Proceedings of the Twentieth European Conference on Computer Systems},
  pages={1279--1297},
  year={2025}
}

@article{hendrycks2021measuring,
  title={Measuring mathematical problem solving with the math dataset},
  author={Hendrycks, Dan and Burns, Collin and Kadavath, Saurav and Arora, Akul and Basart, Steven and Tang, Eric and Song, Dawn and Steinhardt, Jacob},
  journal={arXiv preprint arXiv:2103.03874},
  year={2021}
}

@article{he2024olympiadbench,
  title={Olympiadbench: A challenging benchmark for promoting agi with olympiad-level bilingual multimodal scientific problems},
  author={He, Chaoqun and Luo, Renjie and Bai, Yuzhuo and Hu, Shengding and Thai, Zhen Leng and Shen, Junhao and Hu, Jinyi and Han, Xu and Huang, Yujie and Zhang, Yuxiang and others},
  journal={arXiv preprint arXiv:2402.14008},
  year={2024}
}

@inproceedings{AIME,
  author={MAA},
  title = {American Invitational Mathematics Examination - AIME},
  booktitle = {American Invitational Mathematics Examination - AIME}
}

@inproceedings{AMC23,
  author={MAA},
  title = {American Mathematics Competitions},
  booktitle = {American Mathematics Competitions},
  year = {2023}
}

@article{hendrycks2020measuring,
  title={Measuring massive multitask language understanding},
  author={Hendrycks, Dan and Burns, Collin and Basart, Steven and Zou, Andy and Mazeika, Mantas and Song, Dawn and Steinhardt, Jacob},
  journal={arXiv preprint arXiv:2009.03300},
  year={2020}
}

@inproceedings{rein2024gpqa,
  title={Gpqa: A graduate-level google-proof q\&a benchmark},
  author={Rein, David and Hou, Betty Li and Stickland, Asa Cooper and Petty, Jackson and Pang, Richard Yuanzhe and Dirani, Julien and Michael, Julian and Bowman, Samuel R},
  booktitle={First Conference on Language Modeling},
  year={2024}
}

@article{yao2025diversity,
  title={Diversity-Aware Policy Optimization for Large Language Model Reasoning},
  author={Yao, Jian and Cheng, Ran and Wu, Xingyu and Wu, Jibin and Tan, Kay Chen},
  journal={arXiv preprint arXiv:2505.23433},
  year={2025}
}

@article{zhang2025relax,
  title={ReLaX: Reasoning with Latent Exploration for Large Reasoning Models},
  author={Zhang, Shimin and Chen, Xianwei and Shen, Yufan and Ye, Ziyuan and Wu, Jibin},
  journal={arXiv preprint arXiv:2512.07558},
  year={2025}
}

@article{liu2025understanding,
  title={Understanding r1-zero-like training: A critical perspective},
  author={Liu, Zichen and Chen, Changyu and Li, Wenjun and Qi, Penghui and Pang, Tianyu and Du, Chao and Lee, Wee Sun and Lin, Min},
  journal={arXiv preprint arXiv:2503.20783},
  year={2025}
}

@article{zhang2025srpo,
  title={Srpo: A cross-domain implementation of large-scale reinforcement learning on llm},
  author={Zhang, Xiaojiang and Wang, Jinghui and Cheng, Zifei and Zhuang, Wenhao and Lin, Zheng and Zhang, Minglei and Wang, Shaojie and Cui, Yinghan and Wang, Chao and Peng, Junyi and others},
  journal={arXiv preprint arXiv:2504.14286},
  year={2025}
}

@article{zhang2025100,
  title={100 days after deepseek-r1: A survey on replication studies and more directions for reasoning language models},
  author={Zhang, Chong and Deng, Yue and Lin, Xiang and Wang, Bin and Ng, Dianwen and Ye, Hai and Li, Xingxuan and Xiao, Yao and Mo, Zhanfeng and Zhang, Qi and others},
  journal={arXiv preprint arXiv:2505.00551},
  year={2025}
}

@misc{zeng2025simplerlzooinvestigatingtamingzero,
      title={SimpleRL-Zoo: Investigating and Taming Zero Reinforcement Learning for Open Base Models in the Wild}, 
      author={Weihao Zeng and Yuzhen Huang and Qian Liu and Wei Liu and Keqing He and Zejun Ma and Junxian He},
      year={2025},
      eprint={2503.18892},
      archivePrefix={arXiv},
      primaryClass={cs.LG},
      url={https://arxiv.org/abs/2503.18892}, 
}

@article{xia2025tokenskip,
  title={Tokenskip: Controllable chain-of-thought compression in llms},
  author={Xia, Heming and Leong, Chak Tou and Wang, Wenjie and Li, Yongqi and Li, Wenjie},
  journal={arXiv preprint arXiv:2502.12067},
  year={2025}
}

@inproceedings{kang2025c3ot,
  title={C3ot: Generating shorter chain-of-thought without compromising effectiveness},
  author={Kang, Yu and Sun, Xianghui and Chen, Liangyu and Zou, Wei},
  booktitle={Proceedings of the AAAI Conference on Artificial Intelligence},
  volume={39},
  number={23},
  pages={24312--24320},
  year={2025}
}

@article{wang2026deepmed,
  title={DEEPMED: Building a Medical DeepResearch Agent via Multi-hop Med-Search Data and Turn-Controlled Agentic Training \& Inference},
  author={Wang, Zihan and Wang, Hao and Feng, Shi and Yang, Xiaocui and Wang, Daling and Zhang, Yiqun and Lin, Jinghao and Yang, Haihua and Ji, Xiaozhong},
  journal={arXiv preprint arXiv:2601.18496},
  year={2026}
}

@article{wang2026learning,
  title={Learning While Staying Curious: Entropy-Preserving Supervised Fine-Tuning via Adaptive Self-Distillation for Large Reasoning Models},
  author={Wang, Hao and Gu, Hao and Piao, Hongming and Gong, Kaixiong and Ye, Yuxiao and Yue, Xiangyu and Han, Sirui and Guo, Yike and Wu, Dapeng},
  journal={arXiv preprint arXiv:2602.02244},
  year={2026}
}

@inproceedings{zhang2025mer,
  title={MER-Inspector: Assessing model extraction risks from an attack-agnostic perspective},
  author={Zhang, Xinwei and Hu, Haibo and Ye, Qingqing and Bai, Li and Zheng, Huadi},
  booktitle={Proceedings of the ACM on Web Conference 2025},
  pages={4300--4315},
  year={2025}
}

@article{zhang2026adversarial,
  title={On the Adversarial Robustness of Large Vision-Language Models under Visual Token Compression},
  author={Zhang, Xinwei and Liu, Hangcheng and Bai, Li and Wang, Hao and Ye, Qingqing and Zhang, Tianwei and Hu, Haibo},
  journal={arXiv preprint arXiv:2601.21531},
  year={2026}
}

@article{yao2023policy,
  title={Policy space diversity for non-transitive games},
  author={Yao, Jian and Liu, Weiming and Fu, Haobo and Yang, Yaodong and McAleer, Stephen and Fu, Qiang and Yang, Wei},
  journal={Advances in Neural Information Processing Systems},
  volume={36},
  pages={67771--67793},
  year={2023}
}

@inproceedings{wu2023quality,
  title={Quality-similar diversity via population based reinforcement learning},
  author={Wu, Shuang and Yao, Jian and Fu, Haobo and Tian, Ye and Qian, Chao and Yang, Yaodong and Fu, Qiang and Wei, Yang},
  booktitle={The eleventh international conference on learning representations},
  year={2023}
}

@article{zhou2026hm3,
  title={Hm3: Hierarchical multi-objective model merging for pretrained models},
  author={Zhou, Yu and Wu, Xingyu and Wu, Jibin and Feng, Liang},
  journal={Advances in Neural Information Processing Systems},
  volume={38},
  pages={114876--114910},
  year={2026}
}

@article{yang2026model,
  title={Model merging in llms, mllms, and beyond: Methods, theories, applications, and opportunities},
  author={Yang, Enneng and Shen, Li and Guo, Guibing and Wang, Xingwei and Cao, Xiaochun and Zhang, Jie and Tao, Dacheng},
  journal={ACM Computing Surveys},
  volume={58},
  number={8},
  pages={1--41},
  year={2026},
  publisher={ACM New York, NY}
}
\bibliographystyle{icml2026}

\newpage
\appendix
\onecolumn

\section{Theoretical Analysis}
\subsection{Proof of Proposition \ref{prop1}}
\label{Apx:pf}

\begin{proposition}[Restatement of Proposition \ref{prop1}]
Let $x$ be an input and let $z \sim \pi_\theta(\cdot \mid x)$ denote a complete reasoning trace.
Consider a candidate segment $z_1$ that occurs within $z$.
Without loss of generality, we assume $z_1$ is a prefix of the trace (any preceding context can be absorbed into $x$), and write
\begin{equation}
    z = [\, z_1 \Vert z_2 \,],
\end{equation}
where $\Vert$ denotes token concatenation and $z_2$ is the continuation sampled from $\pi_\theta(\cdot\mid x,z_1)$.
Assume $0<\pi_\theta(z_1\mid x)<1$.
If
\begin{align}
    L(z_1)
    + \mathbb{E}_{z_2 \sim \pi_\theta(\cdot \mid x, z_1)}\!\left[L(z_2)\right]
    - \mathbb{E}_{z’ \sim \pi_\theta(\cdot \mid x)}\!\left[L(z’)\right]
    > 
    \frac{1}{\lambda}
    \left(\frac{1}{\pi_\theta(z_1 \mid x)} - 1\right)
    P_\theta(\mathcal{A}\mid x),
\end{align}
where $z'$ is a dummy variable denoting a generic trace sampled from $\pi_\theta(\cdot\mid x)$,  
then assigning probability mass to generating $z_1$ is suboptimal under the correctness-length objective:
decreasing $\pi_\theta(z_1\mid x)$ (and redistributing the removed mass to alternative traces) strictly increases the objective.
\end{proposition}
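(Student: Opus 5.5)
We treat the objective $\mathcal{H}$ in Eq.~\eqref{equ:obj} as a functional of the trace distribution and perturb it along one direction: shift mass from the event "the trace begins with $z_1$" to its complement. Write $p=\pi_\theta(z_1\mid x)$. Split every expectation under $\pi_\theta(\cdot\mid x)$ into two parts. The first is the branch that starts with $z_1$, which has weight $p$. Its correctness is $P_1:=\mathbb{E}_{z_2\sim\pi_\theta(\cdot\mid x,z_1)}P_\theta(\mathcal{A}\mid x,[z_1\Vert z_2])$ and its expected length is $\ell_1:=L(z_1)+\mathbb{E}_{z_2}[L(z_2)]$. The second is the complementary branch, with weight $1-p$, correctness $P_0$ and expected length $\ell_0$. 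This gives
\begin{equation*}
\mathcal{H}=p(P_1-\lambda \ell_1)+(1-p)(P_0-\lambda\ell_0).
\end{equation*}
We consider the perturbation family $p\mapsto p-\delta$. The conditional distributions inside each branch stay fixed, which is what "redistributing the removed mass to alternative traces" means: the removed mass is spread over the complement in proportion to its current distribution. Then $\mathcal{H}$ is affine in $p$, and
\begin{equation*}
-\frac{d\mathcal{H}}{dp}=(P_0-P_1)-\lambda(\ell_0-\ell_1).
\end{equation*}
So strict improvement from decreasing $p$ is equivalent to $\lambda(\ell_1-\ell_0)>P_1-P_0$.

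Next we express this in terms of the quantities in the statement. Let $\bar\ell=\mathbb{E}_{z'}[L(z')]=p\ell_1+(1-p)\ell_0$ and $P_\theta(\mathcal{A}\mid x)=pP_1+(1-p)P_0$. Using the assumption $0<p<1$, we get $\ell_1-\ell_0=(\ell_1-\bar\ell)/(1-p)$. The condition therefore becomes $\lambda(\ell_1-\bar\ell)>(1-p)(P_1-P_0)$.

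It remains to bound the right-hand side by $(\tfrac1p-1)P_\theta(\mathcal{A}\mid x)=\tfrac{1-p}{p}P_\theta(\mathcal{A}\mid x)$. Since $P_0\ge0$, we have $P_1-P_0\le P_1$. Moreover $pP_1\le P_\theta(\mathcal{A}\mid x)$, so $P_1\le P_\theta(\mathcal{A}\mid x)/p$. Chaining these gives
\begin{equation*}
(1-p)(P_1-P_0)\le \frac{1-p}{p}P_\theta(\mathcal{A}\mid x).
\end{equation*}
Hence the hypothesis $\lambda(\ell_1-\bar\ell)>\tfrac{1-p}{p}P_\theta(\mathcal{A}\mid x)$ implies the strict-improvement condition. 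Because $\mathcal{H}$ is affine in $p$ along this family, the objective increases strictly for every $\delta\in(0,p]$, not only infinitesimally.

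The main obstacle is making "decreasing $\pi_\theta(z_1\mid x)$" rigorous. The objective is stated over $\theta$, but the argument perturbs the induced distribution directly. We will state explicitly that the analysis is carried out in distribution space, or equivalently for a sufficiently expressive parameterization. We will also fix the redistribution to the proportional one, since another choice could change $P_0$ and $\ell_0$. A second subtlety is that $P_\theta(\mathcal{A}\mid x,z)$ depends on $\theta$ through the answer head. We will treat the conditional answer distributions as fixed under the perturbation, so that only the trace-generation mass changes.
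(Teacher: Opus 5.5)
Your proposal is correct and follows essentially the same route as the paper: split $\mathcal{H}$ into the branch beginning with $z_1$ and its complement, and reduce strict improvement to $P_1-\lambda\ell_1<P_0-\lambda\ell_0$. You then eliminate the complement via the mixture identities $P_\theta(\mathcal{A}\mid x)=pP_1+(1-p)P_0$ and $\bar\ell=p\ell_1+(1-p)\ell_0$, which is the content of the paper's Steps 1--2 and gives the same condition $\lambda(\ell_1-\bar\ell)>P_1-P_\theta(\mathcal{A}\mid x)=(1-p)(P_1-P_0)$, and finish with $pP_1\le P_\theta(\mathcal{A}\mid x)$ as in Step 3. Your explicit proportional redistribution, which keeps $\mathcal{H}$ affine in $p$ and holds the answer distributions fixed, makes precise what the paper assumes when it says the branch comparison implies strict improvement.
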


\begin{proof}
We begin by rewriting the objective in Eq.~\ref{equ:obj} by partitioning on whether the prefix $z_1$ is generated:
\begin{align}
\mathcal{H}(\theta;\lambda)
&=
\pi_{\theta}(z_1\mid x)\Big\{
\mathbb{E}_{z_2 \sim \pi_{\theta}(\cdot\mid x,z_1)}
\big[P_{\theta}(\mathcal{A}\mid x,z_1,z_2)\big]
-\lambda L(z_1)
-\lambda \mathbb{E}_{z_2 \sim \pi_{\theta}(\cdot\mid x,z_1)}[L(z_2)]
\Big\}
\notag\\
&\quad
+\big(1-\pi_{\theta}(z_1\mid x)\big)\Big\{
\mathbb{E}_{z_1',\,z_2' \sim \pi_{\theta}(\cdot\mid x)\,;\, z_1'\neq z_1}
\big[P_{\theta}(\mathcal{A}\mid x,z_1',z_2')\big]
-\lambda \mathbb{E}_{z_1',\,z_2' \sim \pi_{\theta}(\cdot\mid x)\,;\, z_1'\neq z_1}
\big[L(z')\big]
\Big\},
\end{align}
where $z'=[z_1'\Vert z_2']$ denotes an alternative reasoning trace.
This partition is introduced solely to isolate the contribution of the branch that begins with $z_1$.


Our goal is to characterize when the branch that begins with $z_1$ is strictly worse than the alternative branch:
\begin{align}
\label{pf:target}
    &\mathbb{E}_{z_2 \sim \pi_{\theta}(\cdot\mid x, z_1)} \big[P_{\theta}(\mathcal{A} \mid x, z_1, z_2)\big]
    - \lambda L(z_1)
    - \lambda \mathbb{E}_{z_2 \sim \pi_{\theta}(\cdot\mid x, z_1)} \big[L(z_2)\big]
    \notag \\
    <\;&
    \mathbb{E}_{z_1',\, z_2' \sim \pi_{\theta}(\cdot\mid x)\,;\, z_1' \neq z_1}
    \big[P_{\theta}(\mathcal{A} \mid x, z_1', z_2')\big]
    - \lambda \mathbb{E}_{z_1',\, z_2' \sim \pi_{\theta}(\cdot\mid x)\,;\, z_1' \neq z_1}
    \big[L(z')\big].
\end{align}
If Eq.~\eqref{pf:target} holds, then decreasing the probability mass assigned to generating $z_1$ (i.e., shifting probability away from the branch that starts with $z_1$) strictly increases $\mathcal{H}(\theta;\lambda)$.

\medskip
\noindent\textbf{Step 1:}
To evaluate the right-hand side of Eq.~\eqref{pf:target}, we first express expectations conditioned on $z_1'\neq z_1$ in terms of unconditional expectations.
For any measurable function $f(x,z_1',z_2')$, we have
\begin{align}
\label{pf:cond_exp_identity}
    &\mathbb{E}_{z_1',\, z_2' \sim \pi_{\theta}(\cdot\mid x)\,;\, z_1' \neq z_1} \big[f(x,z_1',z_2')\big] \notag\\
    =& \sum_{z_1'\neq z_1}\sum_{z_2'} f(x,z_1',z_2')\,\pi_{\theta}(z_2'\mid x,z_1')\,
    \frac{\pi_{\theta}(z_1'\mid x)}{1-\pi_{\theta}(z_1\mid x)} \notag\\
    =& \frac{1}{1-\pi_{\theta}(z_1\mid x)}
    \Bigg[
    \sum_{z_1',z_2'} f(x,z_1',z_2')\,\pi_{\theta}(z_2'\mid x,z_1')\,\pi_{\theta}(z_1'\mid x)
    - \pi_{\theta}(z_1\mid x)\sum_{z_2'} f(x,z_1,z_2')\,\pi_{\theta}(z_2'\mid x,z_1)
    \Bigg] \notag\\
    =& \frac{1}{1-\pi_{\theta}(z_1\mid x)}
    \Big[
    \mathbb{E}_{z_1',\,z_2' \sim \pi_{\theta}(\cdot\mid x)}\big[f(x,z_1',z_2')\big]
    - \pi_{\theta}(z_1\mid x)\,\mathbb{E}_{z_2 \sim \pi_{\theta}(\cdot\mid x,z_1)}\big[f(x,z_1,z_2)\big]
    \Big] \notag\\
    =& \frac{1}{1-\pi_{\theta}(z_1\mid x)}
    \Big[
    \mathbb{E}_{z' \sim \pi_{\theta}(\cdot\mid x)}\big[f(x,z')\big]
    - \pi_{\theta}(z_1\mid x)\,\mathbb{E}_{z_2 \sim \pi_{\theta}(\cdot\mid x,z_1)}\big[f(x,z_1,z_2)\big]
    \Big],
\end{align}
where $z'=[z_1'\Vert z_2']$.
The second equality uses the definition of the conditional distribution given $z_1'\neq z_1$; the third equality follows by adding and subtracting the ($z_1'=z_1$) term and regrouping.

\medskip
\noindent Applying Eq.~\eqref{pf:cond_exp_identity} to the right-hand side of Eq.~\eqref{pf:target} yields
\begin{align}
\label{pf:rhs}
    &\mathbb{E}_{z_1',\, z_2' \sim \pi_{\theta}(\cdot\mid x)\,;\, z_1'\neq z_1}\big[P_{\theta}(\mathcal{A} \mid x, z_1', z_2')\big]
    - \lambda\,\mathbb{E}_{z_1',\, z_2' \sim \pi_{\theta}(\cdot\mid x)\,;\, z_1'\neq z_1}\big[L(z')\big]
    \notag\\
    =& \frac{1}{1-\pi_{\theta}(z_1\mid x)}
    \Big\{
    \mathbb{E}_{z' \sim \pi_{\theta}(\cdot\mid x)}\big[P_{\theta}(\mathcal{A}\mid x,z')\big]
    - \lambda\,\mathbb{E}_{z' \sim \pi_{\theta}(\cdot\mid x)}\big[L(z')\big]
    \notag\\
    &\qquad\quad
    - \pi_{\theta}(z_1\mid x)\,
    \mathbb{E}_{z_2 \sim \pi_{\theta}(\cdot\mid x,z_1)}
    \big[P_{\theta}(\mathcal{A}\mid x,z_1,z_2) - \lambda\big(L(z_1)+L(z_2)\big)\big]
    \Big\}.
\end{align}

\medskip
\noindent\textbf{Step 2:}
Substituting Eq.~\eqref{pf:rhs} into the right-hand side of Eq.~\eqref{pf:target} gives
\begin{align}
    &\mathbb{E}_{z_2\sim \pi_\theta(\cdot\mid x,z_1)}\big[P_\theta(\mathcal{A}\mid x,z_1,z_2)\big]
    -\lambda L(z_1)
    -\lambda \mathbb{E}_{z_2\sim \pi_\theta(\cdot\mid x,z_1)}\big[L(z_2)\big]
    \notag\\
    <\;&
    \frac{1}{1-\pi_\theta(z_1\mid x)}
    \Big\{
    \mathbb{E}_{z'\sim \pi_\theta(\cdot\mid x)}\big[P_\theta(\mathcal{A}\mid x,z')\big]
    -\lambda \mathbb{E}_{z'\sim \pi_\theta(\cdot\mid x)}\big[L(z')\big]
    \notag\\
    &\qquad\qquad
    -\pi_\theta(z_1\mid x)\,
    \mathbb{E}_{z_2\sim \pi_\theta(\cdot\mid x,z_1)}
    \big[P_\theta(\mathcal{A}\mid x,z_1,z_2)-\lambda\big(L(z_1)+L(z_2)\big)\big]
    \Big\}.
\end{align}
Multiplying both sides by $1-\pi_\theta(z_1\mid x)$ and expanding the right-hand side yield

\begin{align}
    &\big(1-\pi_\theta(z_1\mid x)\big)\,
    \mathbb{E}_{z_2\sim \pi_\theta(\cdot\mid x,z_1)}
    \big[P_\theta(\mathcal{A}\mid x,z_1,z_2)\big]
    -\big(1-\pi_\theta(z_1\mid x)\big)\lambda L(z_1)
    \notag\\
    &\qquad
    -\big(1-\pi_\theta(z_1\mid x)\big)\lambda\,
    \mathbb{E}_{z_2\sim \pi_\theta(\cdot\mid x,z_1)}
    \big[L(z_2)\big]
    \notag\\
    <\;&
    \mathbb{E}_{z'\sim \pi_\theta(\cdot\mid x)}
    \big[P_\theta(\mathcal{A}\mid x,z')\big]
    -\lambda\,\mathbb{E}_{z'\sim \pi_\theta(\cdot\mid x)}
    \big[L(z')\big]
    -\pi_\theta(z_1\mid x)\,
    \mathbb{E}_{z_2\sim \pi_\theta(\cdot\mid x,z_1)}
    \big[P_\theta(\mathcal{A}\mid x,z_1,z_2)\big]
    \notag\\
    &\qquad
    +\pi_\theta(z_1\mid x)\lambda L(z_1)
    +\pi_\theta(z_1\mid x)\lambda\,
    \mathbb{E}_{z_2\sim \pi_\theta(\cdot\mid x,z_1)}
    \big[L(z_2)\big].
\end{align}

Now move all terms to the left-hand side and cancel the matching $\pi_\theta(z_1\mid x)$ terms; this gives

\begin{align}
\label{pf:simplify1}
    &\mathbb{E}_{z_2\sim \pi_{\theta}(\cdot|x,z_1)}[P_{\theta}(\mathcal{A}\mid x,z_1,z_2)]
    - \mathbb{E}_{z'\sim \pi_{\theta}(\cdot|x)}[P_{\theta}(\mathcal{A}\mid x,z')]
    - \lambda L(z_1)
    - \lambda \mathbb{E}_{z_2 \sim \pi_{\theta}(\cdot|x,z_1)}[L(z_2)] \notag \\
    &\quad + \lambda \mathbb{E}_{z'\sim \pi_{\theta}(\cdot|x)}[L(z')]
    < 0.
\end{align}

Applying the law of total expectation, i.e., 
\[
\mathbb{E}_{z_2\sim \pi_\theta(\cdot\mid x,z_1)}
\big[P_\theta(\mathcal{A}\mid x,z_1,z_2)\big]
= P_\theta(\mathcal{A}\mid x,z_1),
\qquad
\mathbb{E}_{z'\sim \pi_\theta(\cdot\mid x)}
\big[P_\theta(\mathcal{A}\mid x,z')\big]
= P_\theta(\mathcal{A}\mid x),
\]
the inequality further simplifies to
\begin{equation}
\label{pf:simplify2}
    P_{\theta}(\mathcal{A}\mid x,z_1) - P_{\theta}(\mathcal{A}\mid x)
    - \lambda L(z_1)
    - \lambda \mathbb{E}_{z_2 \sim \pi_{\theta}(\cdot\mid x,z_1)}\!\big[L(z_2)\big]
    + \lambda \mathbb{E}_{z' \sim \pi_{\theta}(\cdot\mid x)}\!\big[L(z')\big]
    < 0.
\end{equation}
\medskip
\noindent\textbf{Step 3:}
Note that
\begin{equation}
    P_{\theta}(\mathcal{A}\mid x, z_1)
    = \frac{P_{\theta}(\mathcal{A}, z_1 \mid x)}{\pi_{\theta}(z_1\mid x)}
    \le \frac{P_{\theta}(\mathcal{A}\mid x)}{\pi_{\theta}(z_1\mid x)},
\end{equation}

hence,
\begin{align}
\label{pf:cond_bound}
    &P_{\theta}(\mathcal{A} \mid x, z_1)
    - P_{\theta}(\mathcal{A} \mid x)
    - \lambda L(z_1)
    - \lambda \mathbb{E}_{z_2 \sim \pi_{\theta}(z_2|x, z_1)} [L(z_2)]
    + \lambda \mathbb{E}_{z' \sim \pi_{\theta}(z'|x)} [L(z')]
    \notag \\
    \leq~&
    \frac{1-\pi_{\theta}(z_1|x)}{\pi_{\theta}(z_1|x)}
    P_{\theta}(\mathcal{A} \mid x)
    - \lambda L(z_1)
    - \lambda \mathbb{E}_{z_2 \sim \pi_{\theta}(z_2|x, z_1)} [L(z_2)]
    + \lambda \mathbb{E}_{z' \sim \pi_{\theta}(z'|x)} [L(z')].
\end{align}

Therefore, it suffices to enforce a stronger condition that guarantees Eq.~\eqref{pf:simplify2}.
Specifically, whenever
\begin{equation}
\label{pf:sufficient}
    L(z_1)
    + \mathbb{E}_{z_2 \sim \pi_{\theta}(\cdot\mid x,z_1)} \big[L(z_2)\big]
    - \mathbb{E}_{z' \sim \pi_{\theta}(\cdot\mid x)} \big[L(z')\big]
    >
    \frac{1}{\lambda}
    \Big(\frac{1}{\pi_{\theta}(z_1\mid x)} - 1\Big)
    P_{\theta}(\mathcal{A} \mid x),
\end{equation}
the right-hand side of Eq.~\eqref{pf:cond_bound} is non-positive, and hence Eq.~\eqref{pf:simplify2} holds.

Since Steps 1 and 2 show that Eq.~\eqref{pf:simplify2} implies Eq.~\eqref{pf:target}, Eq.~\eqref{pf:sufficient} is a sufficient condition for Eq.~\eqref{pf:target}.
Consequently, assigning probability mass to generating $z_1$ is suboptimal under $\mathcal{H}(\theta;\lambda)$, and the objective can be improved by decreasing $\pi_\theta(z_1\mid x)$ (i.e., shifting probability away from the branch that starts with $z_1$).
And we complete the proof.
\end{proof}

\section{Discussion}

\subsection{A global viewpoint on Proposition \ref{prop1}.}
\label{Apx:prop_discussion}
Proposition~\ref{prop1} also admits a simple ``global'' specialization by taking the segment to be the entire trace.
Specifically, let $z$ denote a complete reasoning trace sampled from $\pi_\theta(\cdot\mid x)$, and set $z_1=z$ and $z_2=\emptyset$ (so $L(z_2)=0$).
Then the length term on the left-hand side reduces to

\[
L(z_1)
    + \mathbb{E}_{z_2 \sim \pi_\theta(\cdot \mid x, z_1)}\!\left[L(z_2)\right]
    - \mathbb{E}_{z' \sim \pi_\theta(\cdot \mid x)}\!\left[L(z')\right]
=
L(z)-\mathbb{E}_{z'\sim\pi_\theta(\cdot\mid x)}[L(z')].
\]

As a result, a sufficient global condition for the trace $z$ to be suboptimal under the correctness-length objective is
\begin{equation}
\label{eq:global_condition}
    L(z)-\mathbb{E}_{z'\sim\pi_\theta(\cdot\mid x)}[L(z')]
    \;>\;
    \frac{1}{\lambda}\Big(\frac{1}{\pi_\theta(z\mid x)}-1\Big)\,P_\theta(\mathcal{A}\mid x),
\end{equation}
in which case decreasing the probability mass assigned to generating the exact trace $z$ strictly improves the objective.

\paragraph{Why a segment-level (local) characterization is still needed.}
Although Eq.~\eqref{eq:global_condition} provides a clean global specialization, it is not practically useful for guiding training or identifying redundancy.
First, it depends on the probability of an \emph{entire} trace, $\pi_\theta(z\mid x)$, which is typically exponentially small in length and highly instance-specific; as a result, the term $\big(\tfrac{1}{\pi_\theta(z\mid x)}-1\big)$ becomes numerically extreme and difficult to estimate reliably.
Second, the implied action (reducing the probability of generating the \emph{entire} trace $z$) cannot provide a practical inductive bias for efficiency: it offers no guidance on \emph{where} redundancy arises within the response and thus provides no actionable direction.

\subsection{Evaluation Protocol}
We make every effort to ensure a fair evaluation.
For distilled reasoning models, our baselines are drawn from published prior work with publicly available checkpoints.
For training from base models, we re-implement uniform length-regularized objectives and keep all other training settings identical.
To minimize confounding factors, all reported numbers come from our own local reproduction under a unified evaluation pipeline shared across methods and model variants.
In particular, we standardize (i) the prompt format and answer extraction rules, (ii) the decoding configuration, (iii) the maximum generation budget, and (iv) the inference backend to ensure consistent tokenization and throughput.

\subsection{Broader Future Directions}

Beyond the directions discussed in Section \ref{sec:conclusion}, \textsc{SLAT} also opens up several possibilities for integration with other post-training paradigms.
One promising direction is to combine segment-level trimming with diversity-oriented reinforcement learning.
While \textsc{SLAT} discourages long, high-confidence redundant stretches, diversity-promoting RL objectives 
\cite{yao2023policy,wu2023quality,wang2026learning} 
could help maintain multiple valid reasoning trajectories and prevent the policy from collapsing into overly compressed but brittle reasoning patterns. This may lead to a better balance between efficiency, exploration, and robustness.
Another interesting direction is to explore whether \textsc{SLAT}-induced efficiency can be incorporated into model merging pipelines
\cite{zhou2026hm3,yang2026model}
, enabling efficient reasoning behaviors to be combined with other desirable capabilities acquired from different post-training recipes.
\textbf{More broadly, these directions suggest that reasoning efficiency should not be treated as an isolated objective, but as a controllable dimension that can be jointly optimized with other demands.}

\section{Implementation Details}
\label{Apx:Implement}
All training runs are conducted on NVIDIA A100 GPUs using the \textbf{verl}~\cite{sheng2025hybridflow} framework, with \textsc{DAPO-Math-17k}~\cite{yu2025dapo} as the training dataset.
For \textsc{GRPO}, we follow the standard recipe, except that we use a larger clipping ratio and eliminate the kl loss as suggested in \textsc{DAPO}~\cite{yu2025dapo}.
For \textsc{SLAT}, we adopt the same \textsc{GRPO} recipe and additionally incorporate the efficient-reasoning reward defined in Section~\ref{sec:trim}.
For length-objective baselines, we follow our \textsc{GRPO} recipe and replace the shaping term with the corresponding length-based reward defined in Appendix~\ref{Apx:results_Qwen2.5-Math-7B}.
Detailed training hyperparameters are reported in Table~\ref{table:hyper_training}.

\begin{table}[htbp]
  \caption{Hyperparameter settings in training}
  \label{table:hyper_training}
  \centering
  \begin{tabular}{ll}
    \toprule
     \textbf{Hyperparameter} & \textbf{Value}  \\
     \midrule
    \emph{General settings} \\
    ~~~~~ base models & \makecell{\texttt{Qwen2.5-Math-7B}, \texttt{Qwen3-14B}\\ \texttt{DeepSeek-R1-Distill-Qwen-7B}} \\
    ~~~~~ dataset & DAPO-Math-17K \\
    ~~~~~ max prompt length & 2048 \\
    ~~~~~ max completion length & 20480 (8192 for 14B models) \\
    ~~~~~ training batchsize & 512 (256 for 14B models)\\
    ~~~~~ filter overlong prompts & True \\
    ~~~~~ num generations & 16 \\
    ~~~~~ use vllm & true \\
    ~~~~~ vllm gpu memory utilization & 0.6 \\
    ~~~~~ learning rate & 1e-6 \\
    ~~~~~ lr scheduler type & cosine \\ 
    ~~~~~ kl loss coef & 0.0 \\
    ~~~~~ clip ratio low & 0.2 \\
    ~~~~~ clip ratio high & 0.28 \\
    ~~~~~ training epochs & 5 \\
    ~~~~~ number of GPUs per node & 8 \\
    ~~~~~ number of nodes & 4 \\
    \midrule
    \emph{Length-objective} & \\
    ~~~~~ $\lambda$ & 0.25 \\
    \midrule
    \texttt{SLAT} & \\
    ~~~~~ w & 7 \\
    ~~~~~ $\lambda$ & 0.001 \\
    ~~~~~ $\tau$ & 0.9 \\
    \bottomrule
  \end{tabular}
\end{table}

For evaluation, we mainly focus on mathematical reasoning and use five benchmarks: MATH500~\citep{hendrycks2021measuring}, OlympiadBench~\citep{he2024olympiadbench}, AMC23~\citep{AMC23}, and AIME24/25~\citep{AIME}.
For all models, we sample with temperature $0.7$ and top-$p{=}1$.
To reduce variance, we report accuracy as avg@4 on MATH500 and OlympiadBench, and as avg@16 on AMC23, AIME24, and AIME25 due to their limited number of test problems.
In addition, we report CoT length statistics to measure reasoning efficiency.
We follow the evaluation codebase from Qwen2.5-Math\footnote{\url{https://github.com/QwenLM/Qwen2.5-Math}}.
Evaluation hyperparameters are reported in Table~\ref{table:hyper_evaluation}.
To assess out-of-domain generalization, we additionally evaluate on MMLU-STEM~\cite{hendrycks2020measuring} and GPQA~\cite{rein2024gpqa}. We follow the same evaluation protocol and report both accuracy and reasoning length using avg@4.

\begin{table}[htbp]
  \caption{Hyperparameter settings in evaluation}
  \label{table:hyper_evaluation}
  \centering
  \begin{tabular}{ll}
    \toprule
     \textbf{Hyperparameter} & \textbf{Value}  \\
     \midrule
    \emph{General settings} \\
    ~~~~~ temperature & 0.7 \\
    ~~~~~ top p & 1.0 \\
    ~~~~~ number of sampling & 4 (16 for AMC23 and AIME24\&25)  \\
    ~~~~~ use vllm & true \\
    ~~~~~ vllm gpu memory utilization & 0.6 \\
    \bottomrule
  \end{tabular}
\end{table}

\section{Additional Results}

\subsection{Additional Reasoning Trace Cases}
\label{Apx:case_study}

We provide additional case studies to further illustrate how overthinking manifests as long, high-probability redundant segments in CoT reasoning.

\paragraph{High-probability repetition of problem statements.}
As shown in Fig.~\ref{fig:apx_case1}, when solving the grouping problem, the model devotes a substantial portion of its CoT to restating the prompt rather than advancing the derivation.
For example, it reiterates the setup and constraints in near-verbatim form (e.g., ``The problem requires us to group 3 men and 4 women into three groups with at least one man and one woman in each group,'' and ``One group will be of size 3 and the other two will be of size 2 each'').
The token-probability heatmap highlights these restatements as long contiguous high-probability blocks, indicating near-deterministic repetition with limited informational gain.
This case illustrates an overthinking pattern where high-probability repetition of problem specifications inflates CoT length without contributing new reasoning content.

\begin{figure*}[ht]
  \begin{center}
    \centerline{\includegraphics[width=\linewidth]{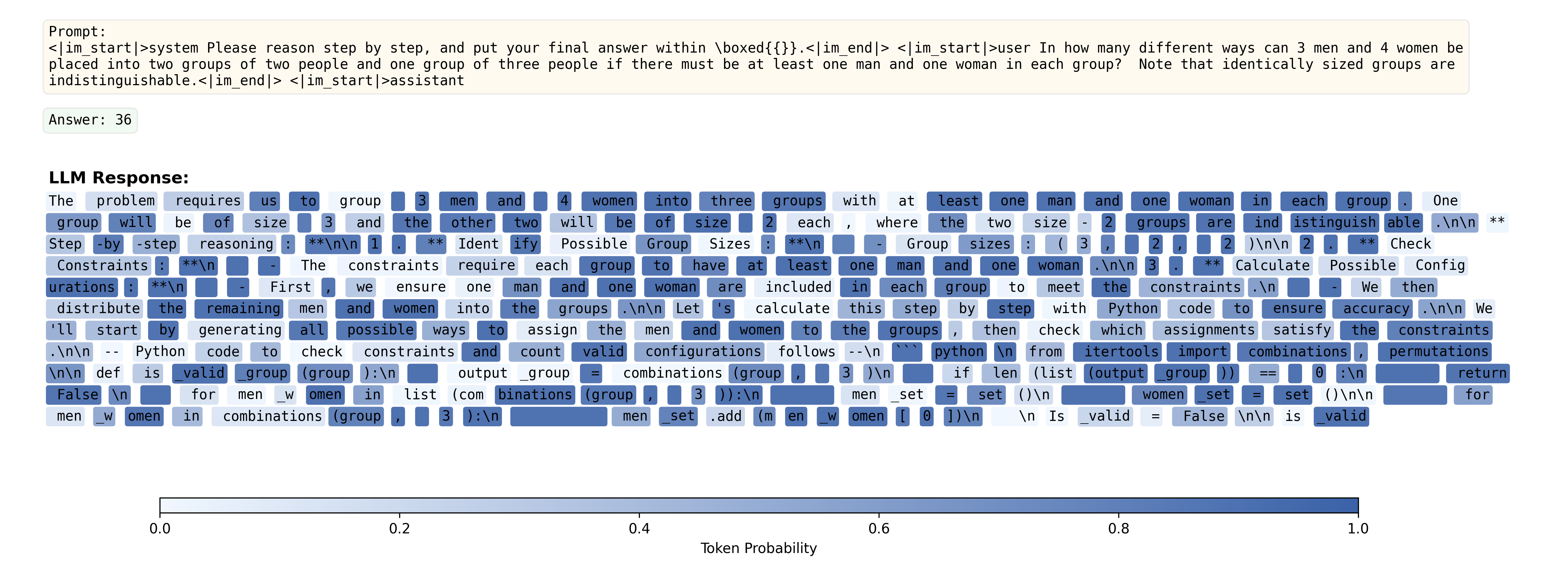}}
    \caption{Additional Case 1: The model repeatedly restates the problem setup and constraints as long high-probability blocks, inflating CoT length with little new reasoning content.}
    \label{fig:apx_case1}
  \end{center}
\end{figure*}

\paragraph{High-probability verification-style tail.}
As shown in Fig \ref{fig:apx_case2}. The prompt is a straightforward unit-conversion and scaling problem, whose computation reduces to $0.25 \times 22000 = 5500$.
Although the model returns the correct answer, the trace contains an extended, template-like explanation and an appended pseudo-\texttt{Python} verification block (defining constants, converting units, and printing the result).
The token-probability visualization highlights this verbose tail as a long contiguous high-probability segment, suggesting near-deterministic generation with limited informational value.
This case exemplifies how overthinking can manifest as high-probability redundant segments that substantially increase length while contributing little to correctness.

\begin{figure*}[ht]
  \begin{center}
    \centerline{\includegraphics[width=\linewidth]{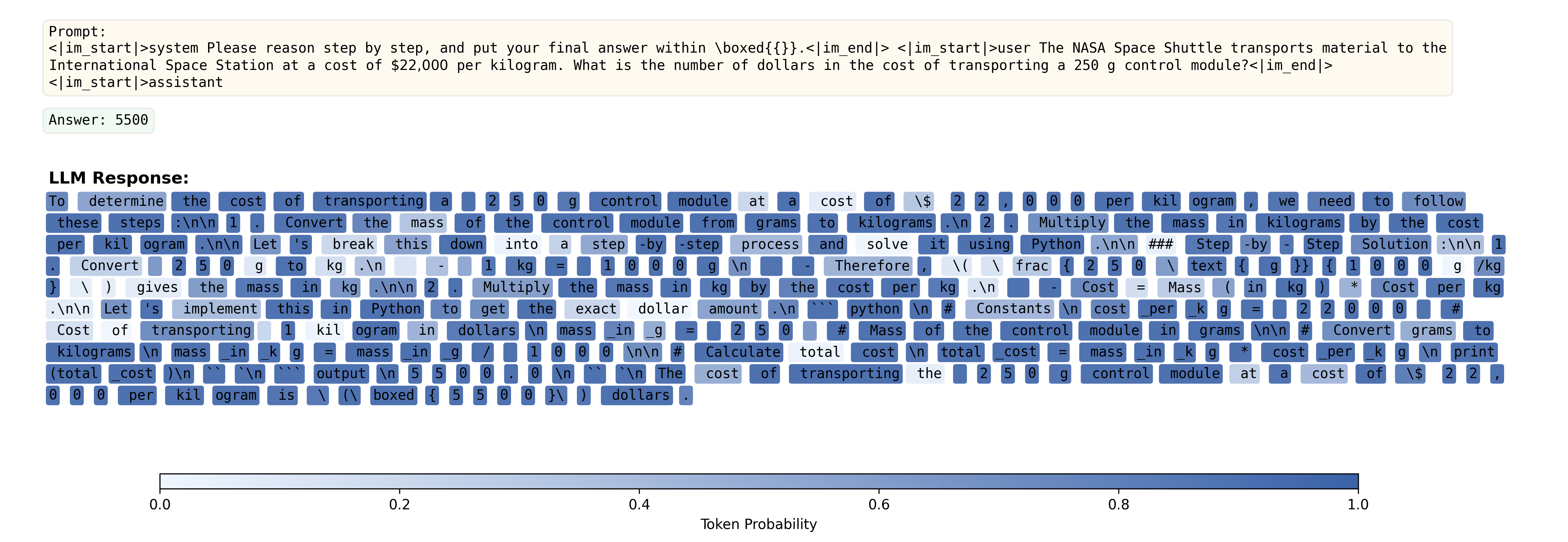}}
    \caption{Additional Case 2: The model outputs the correct answer but generates a long high-probability tail (template-style explanation and pseudo-\texttt{Python} verification), illustrating a redundant segment that increases CoT length with little benefit to correctness.}
    \label{fig:apx_case2}
  \end{center}
\end{figure*}

\subsection{Detailed Results and training objective in Section \ref{sec:exp_main_q2.5}}
\label{Apx:results_Qwen2.5-Math-7B}

Below we summarize the implementations of different length objectives used in Section~\ref{sec:exp_main_q2.5}.
These baselines differ only in the reward shaping term, and all other training and evaluation settings are kept identical to \textsc{SLAT}.
Following the notation in the main text, $x$ denotes the input question and $L(z)$ denotes the CoT length.

\texttt{Truncation:}
\begin{equation}
    r_{\text{trunc}} = r_{\text{ori}} \cdot \mathbb{I} (L(z) < L_0),
\end{equation}
where $L_0$ is a predefined truncation length. We use $L_0 = 1024$ in our experiments.

\texttt{Length-MaxMin:}
\begin{equation}
    r_{\text{MaxMin}} = r_{\text{ori}} + \lambda \cdot 
    \begin{cases}
    \gamma, & \text{if correct},\\
    \min(0,\gamma), & \text{otherwise},
    \end{cases}
\end{equation}
where
\begin{equation}
    \gamma = 0.5 - \frac{L(z)-L_{\min}}{L_{\max}-L_{\min}},
\end{equation}
and $L_{\max}$ and $L_{\min}$ denote the maximum and minimum response lengths within the GRPO sampling group, respectively.

\texttt{Length-Mean:}
\begin{equation}
    r_{\text{Mean}} = r_{\text{ori}} - \lambda \cdot 
    \sigma\!\left(
    \frac{L(z)-L_{\text{mean}}}{L_{\text{std}}}
    \right),
\end{equation}
where $\sigma(\cdot)$ denotes the sigmoid function, and $L_{\text{mean}}$ and $L_{\text{std}}$ are the mean and standard deviation of the response lengths within the GRPO sampling group, respectively.

Here, $\lambda$ controls the strength of the length-penalty reward.
For \texttt{Length-MaxMin} and \texttt{Length-Mean}, due to limited computational resources, we evaluate two candidate values of $\lambda$ ($0.25,0.1$) for each objective and report the best-performing results (one dominates the other).

We also provide the detailed numerical results for the experiments in Section~\ref{sec:exp_main_q2.5}.
As shown in Table~\ref{tab:exp_Qwen7B}, \textsc{SLAT} achieves a consistently favorable accuracy-length trade-off compared to these token-uniform length objectives, improving efficiency while maintaining competitive accuracy.

\begin{table*}[h!]
    \centering
    \caption{Comparison of models training the base model \texttt{Qwen2.5-Math-7B} with different objective. Accuracy (Acc.) and average token length (Len.) are reported for each dataset.}
    \begin{NiceTabular}{lccccccccccccc}[code-before=\rowcolors{8-11}{lightblue}{lightblue}]
        \toprule
          & \multicolumn{2}{c}{\textbf{\makecell{MATH\\500}}}
          & \multicolumn{2}{c}{\textbf{\makecell{Olympiad\\Bench}}}
          & \multicolumn{2}{c}{\textbf{AMC23}}
          & \multicolumn{2}{c}{\textbf{AIME24}}
          & \multicolumn{2}{c}{\textbf{AIME25}} 
          & \multicolumn{3}{c}{\textbf{Avg.}} \\
          & \textbf{Acc.} & \textbf{Len.}
          & \textbf{Acc.} & \textbf{Len.}
          & \textbf{Acc.} & \textbf{Len.}
          & \textbf{Acc.} & \textbf{Len.}
          & \textbf{Acc.} & \textbf{Len.}
          & \textbf{Acc.$\uparrow$} & \textbf{Len.$\downarrow$} 
          \\
        \midrule
        Qwen2.5-Math-7B & 42.8 & 1239 & 13.3 & 1602 & 30.7 & 1383 & 10.7 & 1620 & 4.2 & 1588 & 20.3 & 1486 \\ 
        + GRPO & 82.2 & 688 & 42.9 & 999 & 69.3 & 927 & 33.8 & 1332 & 15.7 & 1325 & 48.8 & 1054 \\
        + Truncation & 79.6 & 635 & 40.7 & 868 & 68.8 & 879 & 30.2 & 1097 & 11.5 & 1140 & 46.2 & 923 \\
        + Length-MaxMin & 79.9 & 436 & 41.3 & 583 & 69.2 & 438 & 28.5 & 783 & 9.8 & 847 & 45.7 & 617 \\
        + Length-Mean & 77.4 & 347 & 38.3 & 470 & 67.0 & 436 & 27.9 & 642 & 9.1 & 682 & 43.9 & 515 \\
        + SLAT$_{w=5,\lambda=0.001}$ & 78.0 & 320 & 38.7 & 441 & 68.1 & 429 & 25.2 & 610 & 9.2 & 630 & 43.8 & 486 \\
        + SLAT$_{w=7,\lambda=0.001}$ & 79.6 & 379 & 41.1 & 530 & 69.1 & 472 & 30.4 & 660 & 10.9 & 789 & 46.2 & 566 \\
        + SLAT$_{w=7,\lambda=0.005}$ & 77.6 & 374 & 38.8 & 494 & 67.4 & 426 & 28.2 & 551 & 8.0 & 649 & 44.0 & 498 \\
        + SLAT$_{w=9,\lambda=0.001}$ & 80.3 & 442 & 41.9 & 619 & 69.2 & 597 & 31.0 & 1161 & 12.0 & 1000 & 46.9 & 763 \\
      \bottomrule
      \end{NiceTabular}
    \label{tab:exp_Qwen7B}
\end{table*}

\subsection{Results on other domains}
\begin{table}[h!]
    \centering
    \caption{Comparison of models trained from the base model \texttt{DeepSeek-R1-Distill-Qwen-7B} (denoted as Original). Accuracy (Acc.) and average token length (Len.) are reported for each dataset.
    For baselines, we directly evaluate their publicly released checkpoints under the same evaluation protocol. 
    The last row additionally reports the relative change (\%) of \textsc{SLAT} with respect to the original model.}
    \begin{NiceTabular}{lcccc}[code-before=\rowcolors{8}{lightblue}{lightblue}]
    \toprule
      & \multicolumn{2}{c}{\textbf{MMLU}}
      & \multicolumn{2}{c}{\textbf{GPQA}} \\
      & \textbf{Acc.$\uparrow$} & \textbf{Len.$\downarrow$} 
      & \textbf{Acc.$\uparrow$} & \textbf{Len.$\downarrow$} \\
    \midrule
    Original & 53.2 & 1674 & 47.5 & 6663 \\
    LC-R1 & 58.0 & \underline{880} & 49.5 & 4249 \\
    DAST & \textbf{58.3} & 1612 & \textbf{51.0} & 6273 \\
    TLMRE & 50.8 & 1023 & 47.9 & 5255 \\
    L1-Max & 55.3 & 973 & 47.0 & \textbf{2342} \\
    SLAT & \textbf{58.3} & \textbf{756} & \underline{50.6} & \underline{3234} \\
    & \diffa{+9.6} & \diffl{-55} & \diffa{+6.5} & \diffl{-51} \\
      \bottomrule
      \end{NiceTabular}
    \label{tab:exp_beyond_math}
\end{table}


\subsection{Qualitative Case Analysis}
\label{Apx:er_case}
We provide qualitative analysis results in Fig. \ref{fig:case_qualitative} and Fig. \ref{fig:case_qualitative_math}.

\begin{figure}[ht]
  \begin{center}
    \centerline{\includegraphics[width=\linewidth]{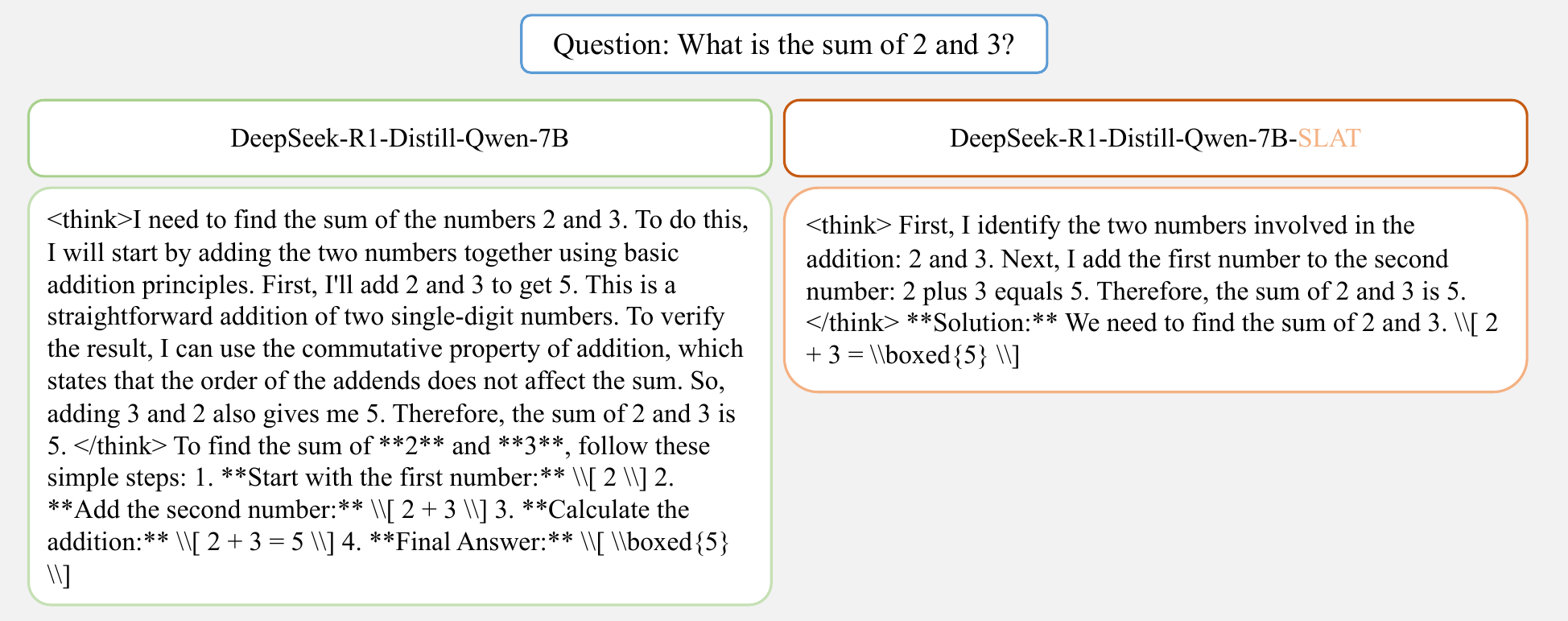}}
    \caption{
    Qualitative comparison of question $2+3$.
    Given a simple question, the baseline distilled reasoner produces an overlong CoT with redundant scaffolding (restating the task, invoking generic principles, and listing trivial steps) despite quickly reaching the correct answer.
    The \textsc{SLAT}-trained model returns the same correct result with a substantially shorter and more focused trace.}
    \label{fig:case_qualitative}
  \end{center}
\end{figure}

\begin{figure}[ht]
  \begin{center}
    \centerline{\includegraphics[width=\linewidth]{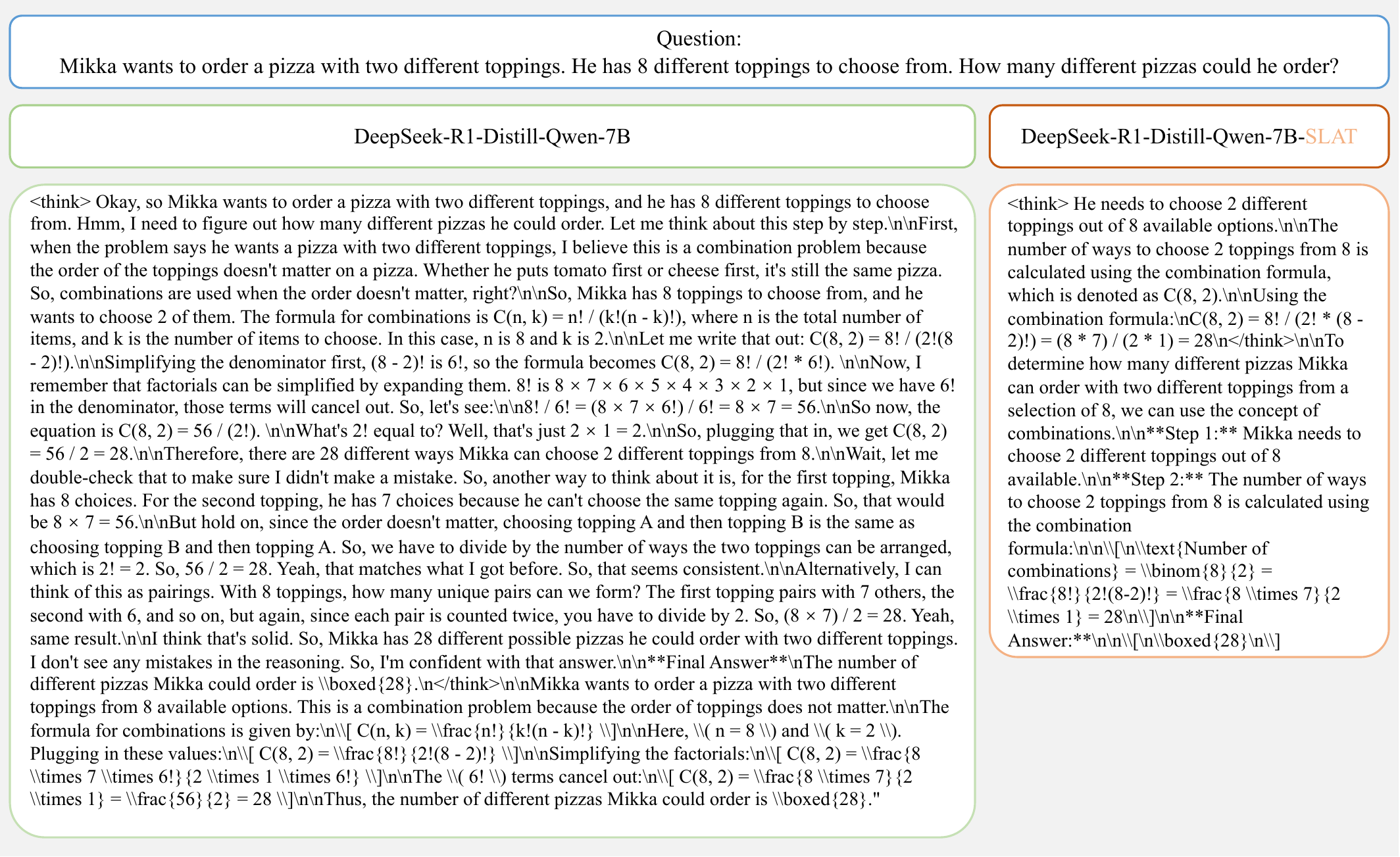}}
    \caption{
    Qualitative comparison of a question in MATH500.
    Both models produce the correct answer, but the original distilled reasoner continues with redundant scaffolding (repeatedly restating the combination setup and re-deriving the same quantity via multiple equivalent explanations), whereas \textsc{SLAT} gives a concise derivation and stops once the answer is determined.
    }
    \label{fig:case_qualitative_math}
  \end{center}
\end{figure}

\end{document}